\def\eqref#1{equation~\ref{#1}}
\def\1{\bm{1}}
\def\ve{{\bm{e}}}
\def\vh{{\bm{h}}}
\def\vk{{\bm{k}}}
\def\vn{{\bm{n}}}
\def\vr{{\bm{r}}}
\def\vs{{\bm{s}}}
\def\vt{{\bm{t}}}
\def\vv{{\bm{v}}}
\def\vx{{\bm{x}}}
\def\mF{{\bm{F}}}
\def\mW{{\bm{W}}}
\DeclareMathAlphabet{\mathsfit}{\encodingdefault}{\sfdefault}{m}{sl}
\SetMathAlphabet{\mathsfit}{bold}{\encodingdefault}{\sfdefault}{bx}{n}
\newcommand{\tens}[1]{\bm{\mathsfit{#1}}}
\def\tA{{\tens{A}}}
\def\tF{{\tens{F}}}
\def\sV{{\mathbb{V}}}
\newcommand{\softmax}{\mathrm{softmax}}
\newcommand{\embedding}{\mathrm{embedding}}
\DeclareMathOperator{\vect}{vec}
\DeclareMathOperator{\LN}{\mathrm{LN}}
\DeclareMathOperator{\FWM}{\mathrm{FWM}}
\DeclareMathOperator{\LSTM}{\mathrm{LSTM}}
\newcommand\StartAppendixEntries{}
  \renewcommand\StartAppendixEntries{\value{tocdepth}=-10000\relax}%
  \edef\maintocdepth{\the\value{tocdepth}}%
  \renewcommand\StartAppendixEntries{\value{tocdepth}=\maintocdepth\relax}%
\newcommand*\appendixwithtoc{%
  \cleardoublepage
  \appendix
  \addtocontents{toc}{\protect\StartAppendixEntries}
  \listofatoc
}
\title{Learning Associative Inference Using \\ Fast Weight Memory}
\author{%
Imanol Schlag \\
The Swiss AI Lab IDSIA / USI / SUPSI\\
\texttt{imanol@idsia.ch} \\
\And
Tsendsuren Munkhdalai \\
Microsoft Research \\
\texttt{tsendsuren.munkhdalai@microsoft.com} \\
\And
J\"urgen Schmidhuber \\
The Swiss AI Lab IDSIA / USI / SUPSI\\
\texttt{juergen@idsia.ch} \\
}
\begin{document}

\maketitle

\begin{abstract}
Humans can quickly associate stimuli to solve problems in novel contexts. Our novel neural network model learns state representations of facts that can be composed to perform such associative inference. To this end, we augment the LSTM model with an associative memory, dubbed \textit{Fast Weight Memory} (FWM). Through differentiable operations at every step of a given input sequence, the LSTM \textit{updates and maintains} compositional associations stored in the rapidly changing FWM weights. Our model is trained end-to-end by gradient descent and yields excellent performance on compositional language reasoning problems, meta-reinforcement-learning for POMDPs, and small-scale word-level language modelling.\looseness-1 \footnote{Source code and data used in this paper is available at \href{https://github.com/ischlag/Fast-Weight-Memory-public}{github.com/ischlag/Fast-Weight-Memory-public}}
\end{abstract}

\section{Introduction}
Humans continually adapt in order to understand new situations in changing environments. 
One important adaptive ability is \textit{associative inference} for
composing features extracted from distinct experiences and relating them to each other~\citep{schlichting2015memory, gershman2015discovering}. 
Suppose Alice has shared with you pictures of her toddler.
Later, at the office party, you see a man carrying the depicted toddler.
Since the toddler yields a shared feature in two different contexts, it may be plausible to infer that the man is Alice's partner, without ever seeing him and Alice together. 
The ability to rapidly associate and bind together novel stimuli can help to derive knowledge systematically, in addition to the knowledge gained directly from observation. 

Virtually all modern cognitive architectures applied to challenging artificial intelligence problems are based on deep artificial neural networks (NNs).
Despite their empirical successes and theoretical generality, NNs tend to struggle to generalise in situations similar to the given example~\citep{lake2017building, phillips1995connectionism, still_not_systematic_lake}. 
This weakness becomes even more severe if the training and test data exhibit systematic differences~\citep{atzmon2016learning, Agrawal2017CVQAAC}. For example, during training, the man's representation might never be associated with the toddler's, but during testing, this association might be necessary to make a useful prediction. 
In problems where humans excel, this sort of inference is likely ubiquitous since data is often combinatorially complex in a way that observations used during training will likely cover just a small fraction of all possible compositions.
Such a lack of productivity and systematicity is a long-standing argument against the use of NNs as a substrate of an artificial cognitive architecture~\citep{fodor1988connectionism, hadley1994systematicity,mclaughlin2009systematicity}.

The hidden state of a neural model is a learned representation of the task-relevant information extracted from the input.
To generalise to never-seen-before compositions of stimuli, the function which produces the state representation must be able to systematically construct \textit{all possible} states.
This requires a general and preferrably differentiable method, such as the Tensor Product Representation (TPR;~\citet{Smolensky1990TPR}). 
TPRs provide a general and differentiable method for embedding symbolic structures in vector spaces.
A TPR state representation is constructed via the tensor product (i.e. the generalised outer-product) of learned component representations.
Under certain constraints, such a mechanism guarantees a unique representation for every possible combination of components~\citep{Smolensky1990TPR, smolensky2012symbolic}.

In this work, we augment a recurrent NN (RNN) with an additional TPR-like memory representation.
To facilitate the learning of multi-step associative inference, the TPR memory can be queried multiple times in a row, allowing the model to chain together various independent associations. 
In contrast to previous work on fast weights, we apply our memory-augmented RNN to much longer sequences. This requires the model to \textit{update} its associative memory. Furthermore, we demonstrate the generality of our method by applying it to meta-reinforcement learning and small scale language modelling problems. 

In the next section, we cover related memory-augmented NNs. Section \ref{sec:method} describes the FWM in detail. Section \ref{sec:experiments} demonstrates the generality of our method through experiments in the supervised, self-supervised, and meta-reinforcement learning setting. 
The supervised-learning experiments in subsection \ref{subsec:catbAbI} consist of a more challenging version of the bAbI dataset dubbed concatenated-bAbI or catbAbI. 
The meta-reinforcement learning experiment in section \ref{subsec:metaRL} demonstrates the FWM's ability to learn to explore a partially observable environment through its ability to perform associative inference.
Finally, the self-supervised experiments in subsection \ref{subsec:LM} demonstrate that the FWM can compete with the state-of-the-art word-level language models on small benchmark datasets.

\section{Related Work}
\label{sec:related-work}
RNNs such as the Long Short-Term Memory (LSTM;~\citet{lstm97and95,Gers:2000nc}) are in theory capable of implementing any algorithm~\citep{siegelmann91turing}.
However, the linear growth of the hidden state of a fully connected RNN leads to quadratic growth in the number of trainable weights.
Early work addressed this issue through the use of additional memory~\citep{Das:92,mozer1993connectionist} and differentiable fast weights~\citep{Schmidhuber:92ncfastweights,Schmidhuber:93ratioicann}. 
Recently, memory-augmented NNs have solved algorithmic toy problems~\citep{graves2014neural, graves2016} as well as reasoning and inference problems in synthetic and natural language~\citep{weston2015memorynets, xiong2016dynamic}.

Inspired by the random-access memory of computer architectures, a common approach is to incorporate a soft and differentiable lookup table into the NN model.
Such slot-based memory matrices have shown to be difficult to train~\citep{munkhdalai2017neural} and require sophisticated mechanisms for the allocation and deallocation of memory~\citep{csordas2018improving}.
The Transformer-XL (TXL;~\citet{dai2019transformer}), an autoregressive language model variant of the Transformer~\citep{vaswani2017attention}, can be understood as a slot-based memory-augmented RNN where every new state is pushed into an immutable queue of finite size. 
Although it is recurrent, the layers of a transformer architecture are strictly forced to use inputs from a lower layer which limits its generality. Nevertheless, a sufficiently deep and well regularised TXL model has achieved state-of-the-art performance in large scale language modelling tasks.

A biologically more plausible alternative of increasing the memory capacity of NNs are fast-changing weights, i.e. stateful weights that can adapt as a function of its input.
Non-differentiable fast weights or ``dynamic links'' have been published since 1981~\citep{von1981correlation,feldman1982dynamic,hinton1987deblur}.
Subsequent work showed that a regular network can be trained by gradient descent to control the fast weights of a separate network~\citep{Schmidhuber:92ncfastweights} or of itself~\citep{Schmidhuber:93ratioicann} in an end-to-end differentiable fashion.
Recently, fast weights have made a comeback and achieved good results in small toy problems where regular NNs fall short ~\citep{Ba2016using,schlag2017gated,munkhdalai2017meta,pritzel2017neural,ha2017hypernetworks,zhang2017learning,miconi2018differentiable,miconi2018backpropamine,schlag2018nips,munkhdalai2019metalearned,Bartunov2020Meta-Learning}.

Most memory-augmented NNs are based on content-based or key-based lookup mechanisms. 
An alternative to the storage of patterns in a lookup table is the idea that patterns are reconstructed through the implicit iterative minimisation of an energy function, such as in the classical Hopfield network~\citep{steinbuch1961lernmatrix, willshaw1969non, hopfield1982neural, kanerva1988sparse} or the modern Hopfield network~\citep{krotov2016dense, demircigil2017model, ramsauer2020hopfield}.
This is often described as an \textit{auto-associative} type of memory as it reconstructs a previously stored pattern that mostly resembles the current pattern.
A much less studied variation is the \textit{hetero-associative} memory (see e.g. \citet{kosko1988bidirectional}) where the retrieved pattern is different from the input pattern. This is more relevant for our use case. 
We aim to train an LSTM to \textbf{construct}, \textbf{maintain}, and \textbf{edit} its associative memory. 
The ability to edit Hopfield networks partially is not very well studied.
For this reason, we employ a simple (multi-)linear hetero-associative memory as it is more closely related to the theory of TPRs (whose manipulation is well understood) and because the association is retrieved in a single step.

Our work directly builds on two examples of differentiable fast weight memories: the TPR-RNN by \citet{schlag2018nips} and the Metalearned Neural Memory (MNM) by \citet{munkhdalai2019metalearned}. 
The TPR-RNN is a sentence-level model for reasoning on text. It achieves excellent results on the regular bAbI tasks but it underperforms on word-level bAbI \citep{schlag2019enhancing} or algorithmic toy problems \citep{le2020self}.
In contrast, the MNM is a word-level model which augments the LSTM with a fully-connected multi-layer feed-forward network as its memory and trains it using a meta-learning objective.
Both, MNM and TPR-RNN were developed on the regular bAbI dataset which only contains short sequences and does not require the model to remove deprecated associations from its memory.
In this work, we train on an infinite sequence of bAbI stories where our FWM achieves excellent performance and improves over MNM. 
We further demonstrate strong performance in small-scale language modelling and meta reinforcement-learning which demonstrates the generality of our contribution.\looseness-1

\section{Proposed Method}
\label{sec:method}
Our FWM is a fast-changing, multi-linear map which is controlled by a slowly-changing, non-linear LSTM. 
The slow weights of the LSTM are regular NN weights which are updated during training by gradient descent.
In contrast, the fast weights of the FWM are updated by the LSTM at every step of the input sequence through a Hebb-like differentiable mechanism. 
This allows the FWM function to change rapidly even during testing---hence the name \textit{fast} weights. 
Along with updating the fast weights, the LSTM also generates a memory query which is used to retrieve information that was previously stored.
The retrieved information then becomes part of the model's output. 
\subsection{The Fast Weight Memory}
\begin{wrapfigure}{r}{6cm}
  \vspace{-20pt}
  \centering
  \includegraphics[scale=0.5]{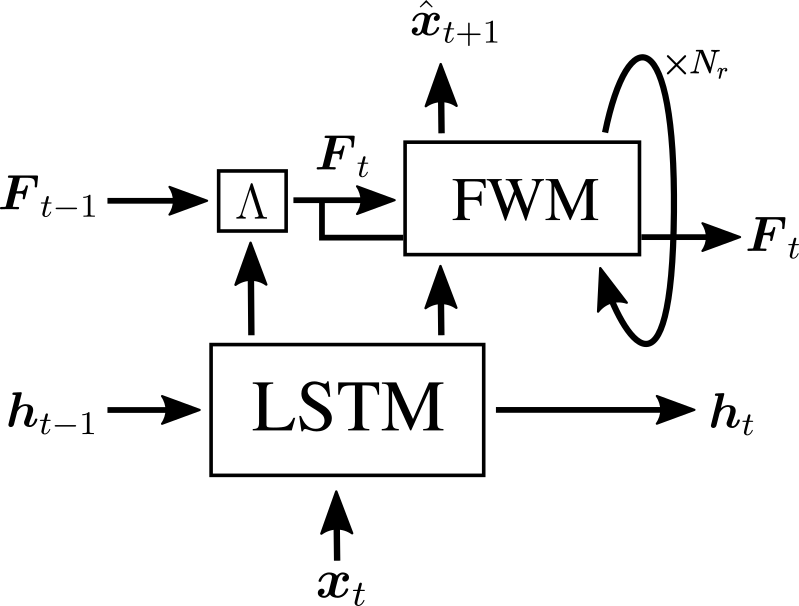}
  \caption{A simplified illustration of our proposed method where $\Lambda$ refers to the write mechanism described in section \ref{sec:fwmwiriting}. $\mF_t$ are the recurrent weights of the FWM which have been generated by the LSTM. The LSTM is a regular slow RNN. The residual connection between the FWM and the LSTM is not depicted.\looseness=-1}
  \label{fig:illustration}
  \vspace{-0.5cm}
\end{wrapfigure}
Given a sequence of tokens $\mathbf{x} = (x_1, ..., x_T)$ from a vocabulary $\sV$, the task of language modelling is to train a model which maximizes the joint probability $p(\mathbf{x})$ which we factorize autoregressively $p(x_{1:T}) = \prod_{t=1}^T p(x_t|x_{0:t-1})$ where $x_0$ is an artificial start token.\footnote{We use the notation $\mathbf{x}_{1:t}$ to refer to the sequence $(x_1, x_2, ..., x_{t})$.}
In this work, we train an RNN model to encode the input sequence $\mathbf{x}_{1:t}$ into $\vh_t$, the hidden state of the LSTM, and $\mF_t$, the fast weight tensor of the FWM, to maximize the probability of the next token $x_{t+1}$.

At step $t$ of the input sequence, the input token $x_{t}$ is embedded in a $d_E$-dimensional vector space using a lookup table $\ve_{t} = \embedding(x_{t})$. An LSTM with $d_\text{LSTM}$ hidden units encodes the sequence of embedded tokens into a fixed size vector representation $\vh_t = \LSTM(\ve_{t}, \vh_{t-1})$. 
The probability distribution over the next token $\hat{\vx}_{t+1} = \softmax(\mW^{(s)} (\vh_t + \FWM(\vh_t, \mF_t))$ where $\mF_t \in \mathbb{R}^{d_\text{FWM} \times d_\text{FWM}^2}$ are the fast weights of the FWM at step $t$ and  $\mW^{(s)} \in \mathbb{R}^{|\sV| \times d_\text{LSTM}}$.
Note that the fast weight matrix $\mF_t$ is a reshaped third-order tensor $\tF_t \in \mathbb{R}^{d_\text{FWM} \times d_\text{FWM} \times d_\text{FWM}}$. This allows us to describe third-order tensor operations using matrix multiplications.
We'll now describe in detail the $\FWM$ function and how its fast weights are updated.\looseness=-1

\subsubsection{Writing} \label{sec:fwmwiriting}
The FWM is updated at every step $t$ using the write mechanism described in this section.
To this end, we extract from the hidden state $\vh_t$: the write strength $\beta$ (a scalar bounded by $0$ and $1$ using the sigmoid function $\sigma$), the two key vectors $\vk_1, \vk_2$, and the new value $\vv$. 
\begin{align}
    [\vk_1, \vk_2, \vv] &= \phi(\mW_\text{write}\vh_t) \\
    \beta &= \sigma(\mW_\beta\vh_t)
\end{align}
The purpose of writing to memory is to learn a \textbf{context-specific} association between the input pattern $\vk_1 \otimes \vk_2$ and the output pattern $\vv$. 
The usage of the tensor-product in the input pattern factorises the the representational space which guarantees unique orthogonal vector representations for novel key pairs.
A specific example of such is given and demonstrated by \citet{schlag2018nips} where the first key learns to represent an entity and the second key a specific action, thereby, learning a representational space that generalises to never seen entity and action compositions.

In stark contrast to the complex memory operations of the TPR-RNN, we employ a single, simple, and word-level operation which is closely related to the perceptron learning rule~\citep{rosenblatt1958perceptron}. It allows the model to replace the previous association $\vv_\text{old}$ with a convex combination of the old and new value $\beta \vv + (1-\beta)\vv_\text{old}$.
With the scalar $\beta$ the LSTM controls if the new association fully replaces the previous value ($\beta=1$) or if the information of both are mixed together.
Our fast weight update works as follows:
First, the current value $\vv_\text{old}$ that is associated with $\vk_1 \otimes \vk_2$ is retrieved.
Second, we remove the old association from the map by subtracting $\vect(\vk_1 \otimes \vk_2) \otimes \vv_\text{old}$ from our memory, where $\vect$ vectorises the matrix.
Third, we add $\vect(\vk_1 \otimes \vk_2) \otimes (\beta \vv + (1-\beta)\vv_\text{old})$.
All three steps can be achieved at once using the following update rule (see appendix section \ref{appendix:sec:updateRule} for the proof):\looseness-1
\begin{align}
    \mF'_{t} &= \mF_{t-1} + \beta \vect(\vk_1 \otimes \vk_2) \otimes (\vv - \vv_\text{old}).
\end{align}
To prevent the fast weights from potentially growing endlessly, we scale down the fast weights whenever $||\mF'_t||_2 > 1$. This is achieved through the following element-wise scaling.
\begin{align}
    \mF_{t} &= \frac{\mF'_{t}}{\text{max}(1, ||\mF'_t||_2)}.
\end{align}

\subsubsection{Reading}
For each step of the input sequence, the model queries the memory in order to retrieve a previously stored value. 
Due to the keys and values being generated separately, the network can retrieve values which are informationally independent from their keys. In order to perform more complex associative inference, like e.g. transitive inference ($a\rightarrow b,b\rightarrow c$, therefore, $a\rightarrow c$), we employ multiple reads where we use the retrieved value as one of the keys in the next query (see equation \ref{eq:fastRNN}). 

\begin{align}
    \vn_t^{(0)} &= \phi(\mW_n\vh_t) \\
    \ve_t^{(i)} &= \phi(\mW_e^{(i)}\vh_t), 1 \leq i \leq N_r  \\
    \vn_t^{(i)} &= \LN(\mF_t \vect(\vn_t^{(i-1)} \otimes \ve_t^{(i)})), 1 \leq i \leq N_r  \label{eq:fastRNN}\\
    \FWM(\vh_t, \mF_t) &= \mW_o \vn_t^{(N_r)}.
\end{align}

Here $\LN$ refers to layernorm without the learned element-wise affine map~\citep{ba2016layer}, $\vect$ reshapes the matrix into a vector, $\phi$ is the hyperbolic tangent function, and the matrices $\mW_n,\mW_e^{(i)} \in \mathbb{R}^{d_\text{FWM} \times d_\text{LSTM}}, i \in \{1..N_r\} $ and $\mW_o \in \mathbb{R}^{d_\text{LSTM} \times d_\text{FWM}}$ are regular slow weights trained by gradient descent which allow us to decouple the dimensionality of the LSTM from the dimensionality of the FWM.
In eq. \ref{eq:fastRNN}, $\mF_t$ is the multi-linear map which we query using the LSTM-generated ``input'' $\ve^{(i)}$ and the previous retrieval $\vn^{(i-1)}$ (except for the first query where both keys are LSTM-generated). \looseness-1

\section{Experiments}
\label{sec:experiments}

\subsection{Concatenated-bAbI}
\label{subsec:catbAbI}
The bAbI tasks is a popular toy dataset to benchmark neural networks with memory augmentations and reasoning capabilities~\citep{babi_tasks_weston}. 
It consists of a set of short stories with questions embedded in the text. 
The stories were generated by simulating multiple entities in a virtual environment and cover different contexts in which entities change their state on their own or through an interaction.
Each story-sample belongs to one of 20 different tasks that the authors of the dataset considered important for intelligent dialogue agents. 
The tasks contain questions which require reasoning capabilities like deduction, coreference, or counting.
All tasks require some level of symbolic reasoning, and the first neural and non-neural baselines demonstrated poor generalisation performance on test data~\citep{babi_tasks_weston}.

We aim to improve the bAbI benchmark as a means of developing intelligent dialogue agents.
To this end, we propose concatenated-bAbI (catbAbI): an infinite sequence of bAbI stories. 
catbAbI is generated from the bAbI dataset and during training, a random sample/story from any task is drawn without replacement and concatenated to the ongoing story. 
The preprocessing for catbAbI addresses several issues: it removes the supporting facts, leaves the questions embedded in the story, inserts the correct answer after the question mark, and tokenises the full sample into a single sequence of words. 
As such, catbAbI is designed to be trained in an autoregressive way and analogous to closed-book question answering.

catbAbI models can be trained in two different ways: language modelling mode (LM-mode) or question-answering mode (QA-mode).
In LM-mode, the catbAbI models are trained like autoregressive word-level language models. 
In QA-mode, the catbAbI models are only trained to predict the tokens that are answers to questions---making it more similar to regular bAbI. QA-mode is simply implemented by masking out losses on non-answer predictions.
In both training modes, the model performance is solely measured by its accuracy and perplexity when answering the questions.
Performance on non-answers is irrelevant on catbAbI because the tokens are either very predictive or inherently unpredictable, and there is nothing appealing to be learned. 
Despite measuring performance only for answers, we argue that LM-mode is interesting for three reasons. 
First, LM-mode removes the bias of knowing which words would benefit from a symbolic inference mechanism. 
Second, LM-mode trains the model on a sequence with tokens which are inherently unpredictable. Such tokens could also appear in natural language and might harm the model's ability to learn a useful representation of the story.
Indeed, in the next section, we will give evidence for such a generalisation gap.
Third, the LM-mode setting allows us to directly compare our method with state-of-the-art language models. \looseness-1

\subsubsection{Results}
We compare our FWM directly with the current state-of-the-art on word-level bAbI: Metalearned Neural Memory (MNM;~\citet{munkhdalai2019metalearned}).
We also include two strong autoregressive word-level language models as baselines: a regularized LSTM ~\citep{merity2018regularizing, lstm_still_sota_melis} and a regularized Transformer-XL (TXL;~\citet{dai2019transformer}). 
Lastly, we also evaluate Ba's Fast Weights which attend to the recent past (JBFW;~\citet{Ba2016using}) but were unable to find hyperparameters that converged.
We truncate backpropagation through time (tBPTT) to 200 tokens for all models and limited the amount of GPU memory to \texttildelow16GB for practical reasons. 
For every model, we performed a hyperparameter search in QA mode over the first 3k steps of which a smaller selection was trained for 30-60k steps. For all models, we adopt the best QA mode hyperparameters for the LM mode results.
Table \ref{tbl:catbAbI} lists the best accuracy and perplexity of each model over three seeds while figure \ref{fig:bestRuns} shows the learning curves of the best seeds.
Further hyperparameter search results can be found in the appendix section \ref{appendix:sec:catbAbIparams}.\looseness=-1
\begin{table}[h]
  \vspace{-6pt}
  \caption{
  Accuracy and perplexity on test data over three seeds of each model's best hyperparameters setting according to our hyperparameter search. Detailed hyperparameters and results can be found in the appendix section~\ref{appendix:sec:catbAbIparams}.}
  \vspace{5pt}
  \label{tbl:catbAbI}
  \centering
  \setlength{\tabcolsep}{4pt}
  \renewcommand{\arraystretch}{1}
  \begin{tabular}{cccccccc}
    \toprule
    Mode & JBFW & LSTM & TXL & MNM & FWM \\
    \midrule
    QA acc   & 13.22 $\pm$ 0.0 & 80.88\% $\pm$ 0.30  & 87.66\% $\pm$ 2.82  & 88.97\% $\pm$ 6.28   & \textbf{96.75\% $\pm$ 0.05} \\
    QA ppl   & 31.19 $\pm$ 8.8 & 1.93 $\pm$ 0.11     & 1.50 $\pm$ 0.14     & 2.50 $\pm$ 1.07      & 1.36 $\pm$ 0.06 \\
    LM acc \rule{0pt}{1.2\normalbaselineskip}
             & 0.0 $\pm$ 0.0  & 80.15\% $\pm$ 0.40   & 90.23\% $\pm$ 1.01  & 69.30 \% $\pm$ 16.60 & \textbf{93.04\% $\pm$ 0.62} \\
    LM ppl   & 160.3 $\pm$ 24.3& 1.84 $\pm$ 0.02     & 1.39 $\pm$ 0.03     & 2.60 $\pm$ 1.02      & 1.45 $\pm$ 0.14 \\
    weights \rule{0pt}{1.2\normalbaselineskip}
                & 548k\footnotemark & 8M     & 10.5M & 1.1M  & 694k \\
    activations & 263k   & 4096   & 4.3M\footnotemark & 30.5k & 33.3k \\
    \bottomrule
  \end{tabular}
\end{table}
\begin{figure}[!ht]
  \centering
  \vspace{-3pt}
  \includegraphics[width=0.9\textwidth]{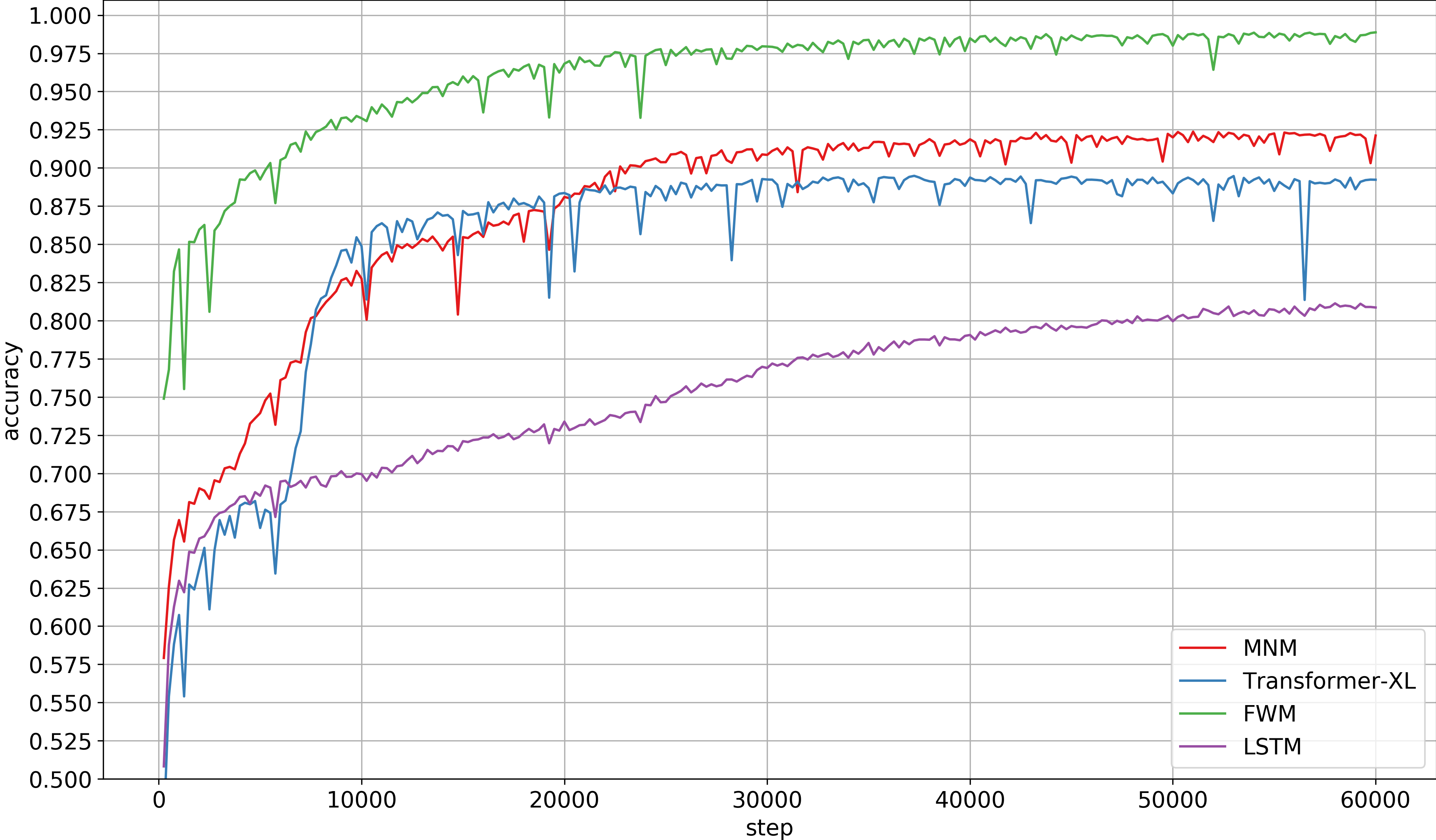}
  \vspace{-10pt}
  \caption{
  QM model validation accuracy of the best-over-all seeds of each model over training steps.\looseness=-1}
  \vspace{-3pt}
  \label{fig:bestRuns}
\end{figure}

Our experiments on catbAbI show that a regularized, 4-layer deep, and residual LSTM, and a 3-layer deep TXL with attention over the last 1400 tokens, achieve strong performance on catbAbI. 
MNM, on the other hand, suffered a \texttildelow10\% drop in QA mode accuracy compared to its performance on bAbI which demonstrates the increased difficulty of catbAbI. 
The JBFW model is not able to make meaningful predictions on catbAbI which may be due to its inability of removing previous associations and fixed fast weight memory decay.
Our FWM achieves an excellent accuracy on catbAbI while being by far the smallest in parameter count and weight to activation ratio. 
The performance gap between FWM and MNM suggests the importance of our fast weight memory mechanism.
In figure \ref{fig:task15} we visualise how the FWM can chain memories from different points in time to perform transitive inference.

We chose to include the TXL model in our comparison due to its autoregressive nature and strong performance in large-scale language modelling benchmarks.
However, we point out that the TXLs context window is larger than the average bAbI story.
In this case, due to the shortness of the stories, catbAbI becomes more of an open-book problem for the TXL model since it has the capability of looking up representations of its previous input whereas the RNN models do not.
This fundamentally limits the TXL model as it can only condition its prediction on information that is no longer than its attention window to past states.
The RNN models, which are general function approximators, for better or for worse, are instead forced to learn to carry the necessary information through time. 
\footnotetext[3]{Bigger JBFW models did not improve performance. See appendix section \ref{appendix:sec:jbfw_search}.}
\footnotetext[4]{The number of immutable activations is $512 \times 2 \times 3 \times (1200 + 199)$ while the number of mutable activations is merely $512 \times 2 \times 3 = 3072$. Only the TXL model maintains immutable activations.\looseness-1}
\begin{figure}[!ht]
  \centering
  \includegraphics[width=\textwidth]{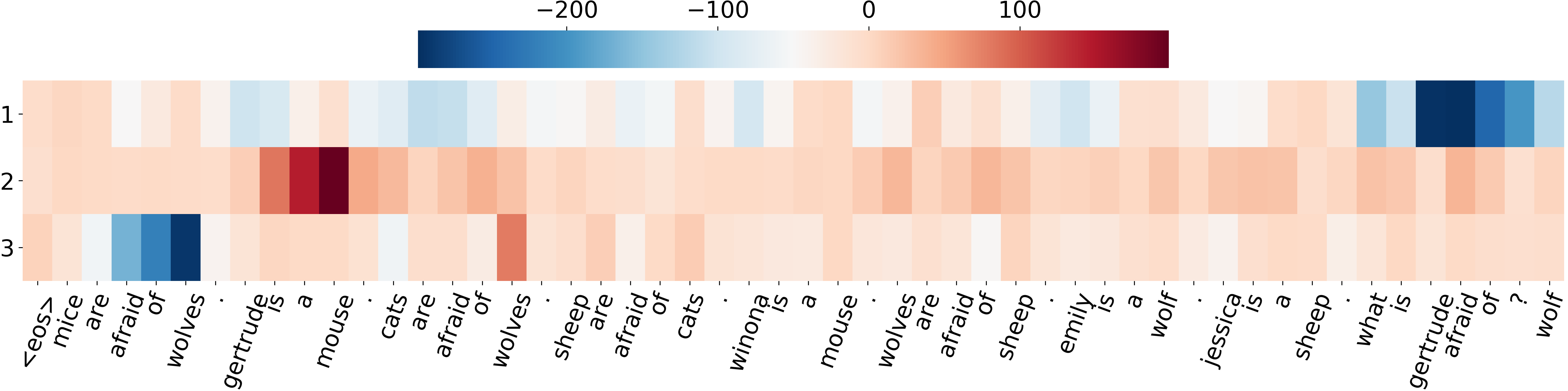}
  \vspace{-3pt}
  \caption{
  A visualisation of the FWMs ability to chain independent associations to perform transitive reasoning on the catbAbI validation data.
  The colour of each grid cells represent the dot product $\langle \vk_1 \protect\otimes \vk_2, \vn \protect\otimes \ve \rangle$ where $\vk_1,\vk_2$ are the write keys of each previous position while $\vn, \ve$ refers to the respective queries generated at ``?'' (second position from the right) for each of the $N_r=3$ memory reads.
  The first query matches most with the keys at the recent positions where the input was ``gertrude'' and ``afraid'' (first row of grid cells). 
  The second query, which partially consists of the value retrieved from the first query, matches with the ``getrude is a mouse'' section.
  The third query, which partially consists of the value retrieved from the second query, matches with the ``mice are afraid of wolves'' section.
  Finally, the FWM correctly outputs the next word and answer to the question: wolf (not seen).
  This likely completes the deduction: gertrude is a mouse, mice are afraid of wolves, therefore, gertrude is afraid of wolves.}
  \vspace{-5pt}
  \label{fig:task15}
\end{figure}
\subsection{Meta-Reinforcement Learning}
\label{subsec:metaRL}
Meta reinforcement learning (Meta-RL) applies meta-learning~\citep{schmidhuber87,Hochreiter:01meta,finn2017model} to the field of reinforcement learning~\citep{Schmidhuber:94selfold}.
An agent is trained on multiple environments (or tasks) and receives environmental feedback as part of its input. 
To maximise its total reward in an environment, the agent has to leverage the feedback signals and adapt.
A successful agent is capable of maximising its reward in novel environments that it has not been exposed to during training.
Recent work achieved notable progress in this domain~\citep{santoro2016meta,mishra2018a,Kirsch2020Improving}. 
\begin{figure}[h]
  \centering
  \vspace{-10pt}
  \includegraphics[scale=0.6]{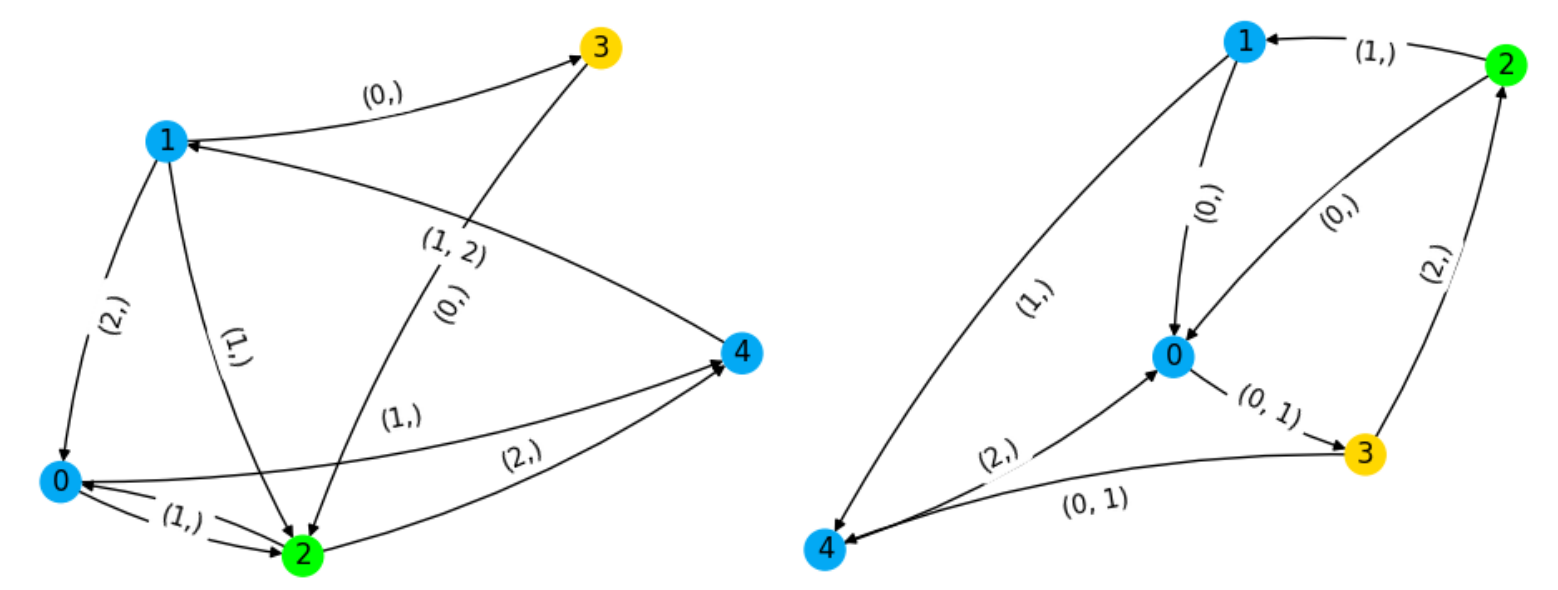}
  \caption{
Two randomly generated environments with the agent's location coloured in green and the reward location coloured in yellow.
Edge labels indicate the set of valid actions (0, 1, or 2) to transition along that arrow.
Invalid actions are not visualised.
The graph and the locations of the agent and reward are set randomly at the beginning of the experiment.
If the agent reaches the reward location or did not reach it after six steps, both are randomly reset.
}
  \label{fig:graph}
  \vspace{-10pt}
\end{figure}
We experiment with tasks drawn randomly from a large set of partially observable Markov decision processes (POMDPs).
In this set, every environment consists of precisely five states and three actions. 
Globally, every environment can be viewed as a sparse directed graph where nodes are locations, and the directed edges are one-way modes of transportation---similar to a metro transit map of a city~\citep{graves2016}.
To generate a new environment, we sample the adjacency matrix of the graph such that actions are deterministic, and every location is reachable from any other location (see figure \ref{fig:graph}).
We sample graphs such that there are no actions that lead to the same location, and such that \textit{not} every action is always a valid way of transitioning. We added the exact algorithm to generate graphs, as well as further details, to the appendix section~\ref{appendix:sec:metaRL}.

The agent's goal is to reach the reward location.
Upon arrival, the agent receives the reward, followed by a random reset of the agent's and reward's location.
Whenever the agent takes an action that does not lead to a new location, it receives a penalty.
At every step, the agent receives as an input: its current location, the reward location, its last action, and the reward received so far. 
\begin{figure}[h]
  \centering
  \includegraphics[scale=0.45]{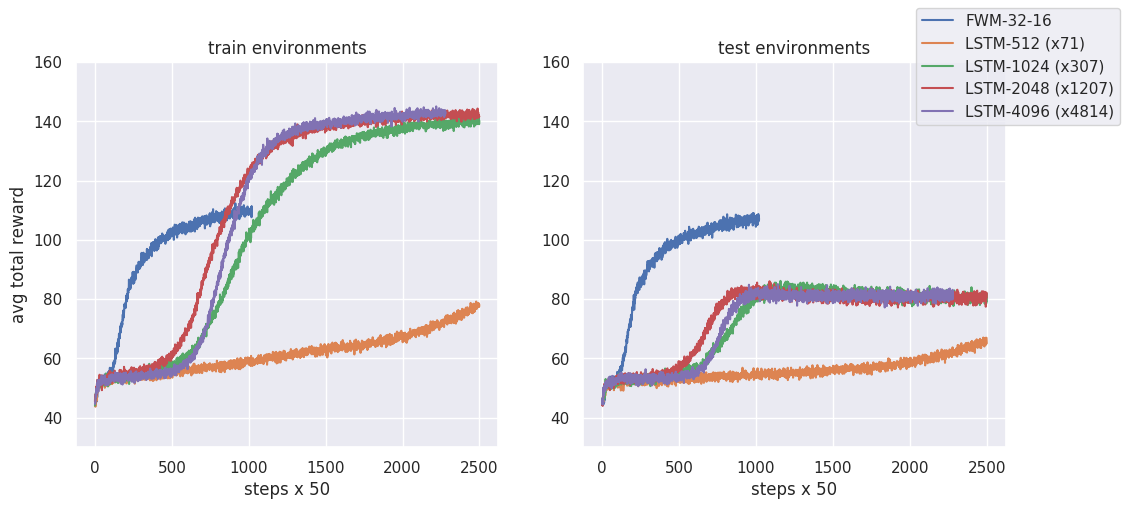}
  \caption{
  Average total reward of the agent when trained on 600 random graphs (left plot) and tested on 600 different graphs (right plot).
  The FWM agent (blue) has a slow LSTM with 32 hidden units and a fast weight memory of size $16 \times 16^2$. 
  We compare to LSTM agents with different sized hidden states. 
  The largest LSTM has 4096 hidden units (red) which roughly matches the number of temporal variables of the FWM.
  The FWM has 14k trainable weights which is by far the lowest. 
  The largest LSTM has 67.4M weights which is roughly 4814 times more than the FWM.
  The relative factor of each LSTM is added to the legend.
  All LSTMs take longer to train and eventually overfit on the training data. 
  Due to the overfitting, the LSTM does not have to explore, which results in a higher total reward on training environments but a lower total reward on test environments.}
  \label{fig:RL}
  \vspace{-5pt}
\end{figure}

We run our experiment for 30 steps and compare our FWM to an LSTM baseline. 
Both methods are trained on the same training set of 600 graphs and tested on 600 novel graphs.
We optimise our agent with the Advantage Actor-Critic (A2C) algorithm, a non-asynchronous version of the A3C method~\citep{mnih2016asynchronous}.
In our experiments, the LSTM-based agent requires more episodes, a bigger network, and eventually overfit to the training graphs.
The FWM-based agent however trains faster and generalises to randomly sampled graphs. 

We argue that the bAbI stories and the episodes on the graphs are similar in the following three ways. 
First, in both problems, the network has to construct a useful and \textit{context-specific} representation from its ongoing input. 
Second, as part of its input, the network repeatedly receives an objective (the reward location versus the question) which requires the exploitation of the context-specific information. 
Third, the model has to produce a discrete sequence (actions in the environment in RL and reasoning steps in catbAbI) to optimise its training signal (high reward versus low uncertainty).

\subsection{Language Modelling}
\label{subsec:LM}
Comparing FWM to autoregressive language models on catbAbI begs the question: how does FWM perform on popular word-level language modelling datasets such as Penn Treebank (PTB;~\citet{mikolov2010recurrent}) or WikiText-2 (WT2;~\citet{merity2016pointer})?
It is unclear to which extend a symbolic inference mechanism is beneficial for language modelling.
PTB and WT2 contain virtually no questions and are constructed from Wikipedia and news articles which are designed to be easily parsed by the reader.
Nevertheless, in figure \ref{fig:phillips} we show how our FWM exploits recurring subject names to reduce its uncertainty.
Not many memory augmented NNs have been able to bridge from small and toy reasoning tasks to general language models---and those which did, underperformed~\citep{paperno-etal-2016-lambada, sukhbaatar2015end}.
\begin{table}[h]
  \vspace{-3pt}
  \caption{
  Best perplexity on the test data of Penn Treebank (PTB) and WikiText-2 (WT2) from three seeds. Detailed results can be found in the appendix in table \ref{appendix:tbl:lmresults}. All PTB models have roughly 24M parameters and all WT2 models have roughly 37M parameters. The AWD-TXL is the Transformer-XL architecture as reported by \citet{dai2019transformer} with the necessary AWD-style regularisation, model averaging, and softmax temperature tuning (see appendix section \ref{appendix:sec:LM}).\looseness-1}
  \label{tbl:LM}
  \centering
  \vspace{5pt}
  \def\mystrut{\rule{0pt}{1\normalbaselineskip}}
  \setlength{\tabcolsep}{4pt}
  \begin{tabular}{lcccccc}
    \toprule
    \multirow{2}{*}{Model} & \multicolumn{2}{c}{PTB} &
    \multicolumn{2}{c}{WT2} \\
    & Validation & Test
    & Validation & Test\\
    \midrule
    AWD-LSTM \citep{merity2018regularizing} & 60.0 
                                            & 57.3
                                            & 68.6
                                            & 65.8  \\
    AWD-TXL \citep{dai2019transformer} & - 
                                       & 54.52
                                       & -
                                       & - \\
    AWD-TXL (ours)  \rule{0pt}{1.2\normalbaselineskip} & 59.39 
                    & 56.50
                    & 65.73
                    & 63.11 \\
    AWD-FWM (ours) & 56.76
                   & \textbf{54.48} 
                   & 63.98
                   & \textbf{61.65} \\
    \bottomrule 
  \end{tabular}
  \vspace{-4pt}
\end{table}
\begin{figure}[h]
  \centering
  \includegraphics[width=\textwidth]{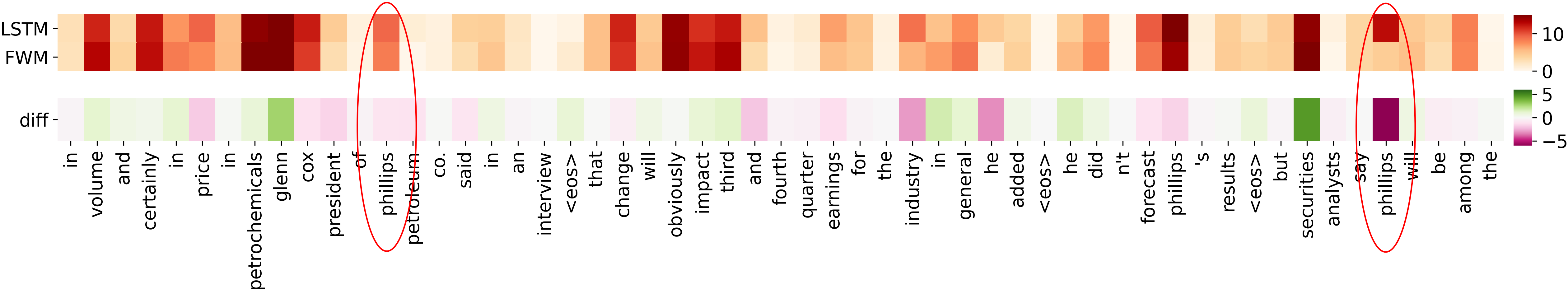}
  \vspace{-6pt}
  \caption{
  Loss comparison between the LSTM and our FWM on a section of the PTB test set. The colour of the grid cells in the first row stands for the cross-entropy error of the LSTM and FWM model. The second row, for their respective difference. Our FWM sometimes shows a lower error on rare subject words such as names of companies and people once they have been introduced. As seen in the red circles, the initial mentioning of ``phillips'' has similar uncertainty between the LSTM and FWM but shortly after that the subject of the sentences is more predictable and the FWM is more certain (4.3 bits difference) whereas the LSTM's uncertainty remains roughly on the same level (12.8 bits).\looseness-1}
  \label{fig:phillips}
  \vspace{-5pt}
\end{figure}
We use the regularized 3-layer AWD-LSTM~\citep{merity2018regularizing} as the slow RNN in our FWM model to minimize further hyperparameter search.
The experimental results in table \ref{tbl:LM} demonstrate a relative improvement over the AWD-LSTM baselines, which suggest the benefit of our FWM even in language modelling benchmarks. 
However, in contrast to catbAbI, all three models achieve very similar results which might indicate that PTB and WT2 do not benefit as strongly from an associative reasoning capacity. 
We added the experimental details to the appendix section~\ref{appendix:sec:LM}.

Since the publication of AWD-LSTM~\citep{merity2018regularizing}, various extensions (some of which are orthogonal to our memory augmentation) have been proposed~\citep{pmlr-v80-krause18a,merity2018regularizing,yang2018breaking}.
In this work, we are not primarily interested in beating the state-of-the-art in language modelling and leave it for future work to explore the possible synergies between these methods. \looseness-1

\section{Discussion}
An order-three memory tensor is a computationally demanding method for constructing compositional state representations. 
With vector components in $\mathbb{R}^n$, the tensor product computation alone has a space and time complexity of $O(n^3)$. 
For practical reasons, this forces the FWM to remain small, relative to the slow NN, which limits the number of associations that can be maintained at once. 
Previous work has proposed approximations of such memory tensors in a variance-optimal way~\citep{schlag2019enhancing}. 
In our ablation experiments in section \ref{appendix:sec:ablation}, we show on catbAbI that concatenating the keys results in a performance accuracy drop of \texttildelow5\%.
We also experiment with fewer read operations (smaller $N_r$) which also results in a performance degradation (appendix figure \ref{appendix:fig:readAblation}).
However, further improvements might not come from scaling up but from more general symbolic manipulations. 
We address the capacity of the FWM and the necessity of the tensor product from a linear hetero-associative memory perspective in section \ref{appendix:sec:discussion} of the appendix.
Finally, our fast weight memory can be thought of as a primitive ``working memory'' of the model---analogous to the working memory in the human brain~\citep{spalding2018ventromedial}.
This idea is supported by recent work which proposes a cognitive model of the human brain that is based on such higher-order tensors~\citep{Tresp2017TheTM}. \looseness-1

\section{Conclusion}
Our new FWM is a fast weights architecture capable of learning from synthetic data to answer questions which require various symbolic reasoning skills. 
To improve generality, we overcome issues of the popular bAbI dataset by introducing more general and more difficult variation dubbed catbAbI. 
We report excellent performance on catbAbI and compare with strong baselines based on state-of-the-art language models, as well as, the previous state-of-the-art in word-level bAbI. 
We also apply the FWM in a challenging meta-reinforcement learning environment where the agent generalises to novel environments by learning from its observations and actions. 
Finally, in a self-supervised setting, we apply the FWM to word-level language modelling on PTB and WT2 where it beats the AWD-LSTM and AWD-Transformer-XL baselines.

\section*{Acknowledgements}
We thank NVIDIA Corporation for donating several DGX
machines, and IBM for donating a Minsky machine.
This research was supported by an European
Research Council Advanced Grant (no: 742870).

\bibliography{references}
\bibliographystyle{iclr2021_conference}

\appendixwithtoc
\newpage

\section{Further Discussion}
\label{appendix:sec:discussion}
One way of assessing the capacity of the third-order tensor memory is its rank (which is analogous to the rank of a matrix). 
However, there exists no general algorithm to determine the rank of a given higher-order tensor $\tA \in \mathbb{R}^{I \times J \times K}$. There exists only a loose upper bound described by $\text{rank}(\tA) \leq \min\{IJ,IK,JK\}$~\citep{kruskal1989rank,kolda2009tensor}.

It might be tempting to simplify the FWM by replacing the outer-product of the input with a concatenation as a means to reduce the space and time complexity. 
However, in highly compositional domains, the concatenated input will suffer from interference between memories.
Consider a problem which, from a set of 10 symbols, requires the association of any three symbols represented by the vectors $\vs, \vr, \vt \in \mathbb{R}^{10}$. 
In the case of a concatenation, one rank of the fast weight memory is $[\vs; \vr] \otimes \vt$ where we refer to $[\vs; \vr]$ as the key representation. 
The read vectors $\vs', \vr' \in \mathbb{R}^{10}$, are then concatenated and matrix multiplied to retrieve the previous association $\hat{\vt} = \mF [\vs'; \vr']$. 
Here we refer to $[\vs'; \vr']$ as the query representation. 
Since there are ten distinct symbols of which any two can behave as a key representation, there exist $10^2=100$ unique key patterns. 
To guarantee noise-free retrieval in any context, the vectors of the key representations have to be orthogonal. 
However, $[\vs'; \vr']$ is only a $20$ dimensional space which means that certain key representations cannot be used simultaneously without interference.
The tensor product, on the other hand, is capable of noise-free retrieval because it represents the key as $\vs \otimes \vr \in \mathbb{R}^{10 \times 10}$ which allows for $100$ orthogonal keys and as such the possibility of noise-free retrieval.
We conclude that if the problem is highly compositional, in a sense that every component can be composed with any other component, then the tensor product will be better suited than a concatenation. Experimentally we evaluate concatenated keys in section \ref{appendix:sec:ablation}. The results show that concatenated keys will result in a slightly worse performance (see figure \ref{appendix:fig:productAblation}). As an alternative, a non-linear memory, e.g. through the use of a softmax, would not require orthogonality in it's keys to be free of interference and could result in a larger storage capacity.

\section{Derivation of the Update Rule}
\label{appendix:sec:updateRule}
\newtheorem{theorem}{Theorem}[section]
\begin{theorem}
Given two key vectors $\vk_1, \vk_2 \in \mathbb{R}^d$ and two value vectors $\vv_\text{old}, \vv_\text{new} \in \mathbb{R}^d$ with $d \in \mathbb Z_{> 0}$, a mixing coefficient $\beta \in (0,1)$, and a fast weight memory $\mF_\text{old} = \vect(\vk_1 \otimes \vk_2) \otimes \vv_\text{old}$ where $\vect$ refers to the vectorisation of the higher-order tensor, then the (recurrent) fast weight update rule given by $\mF_\text{old} + \beta \vect(\vk_1 \otimes \vk_2) \otimes (\vv_\text{new} - \vv_\text{old})$ results in $\mF_\text{new} = \vect(\vk_1 \otimes \vk_2) \otimes [(1-\beta) \vv_\text{old} + \beta \vv_\text{new}]$.
\end{theorem}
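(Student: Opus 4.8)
The plan is to treat this as a direct algebraic identity that follows from the bilinearity of the outer product, so no deep machinery is required. First I would abbreviate the common key factor by writing $\va = \vect(\vk_1 \otimes \vk_2)$, so that the hypothesis reads $\mF_\text{old} = \va \otimes \vv_\text{old}$ and the update to be analysed becomes $\mF_\text{old} + \beta\, \va \otimes (\vv_\text{new} - \vv_\text{old})$. The entire claim then reduces to verifying that this expression equals $\va \otimes [(1-\beta)\vv_\text{old} + \beta \vv_\text{new}]$.

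The key step is to invoke linearity of the outer product $\va \otimes (\cdot)$ in its second argument. Substituting the hypothesis for $\mF_\text{old}$, the update is $\va \otimes \vv_\text{old} + \beta\, \va \otimes (\vv_\text{new} - \vv_\text{old})$, and pulling out the shared factor $\va$ gives $\va \otimes \big(\vv_\text{old} + \beta(\vv_\text{new} - \vv_\text{old})\big)$. I would then simplify the value terms inside the bracket by collecting: $\vv_\text{old} + \beta \vv_\text{new} - \beta \vv_\text{old} = (1-\beta)\vv_\text{old} + \beta \vv_\text{new}$, which is exactly the value factor of $\mF_\text{new}$. This shows the update leaves the key factor $\va$ untouched while replacing the stored value by the intended convex combination.

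There is no genuine obstacle here; the only point that merits a sentence of care is justifying distributivity at the level of the reshaped object, since $\mF$ is the vectorisation of a third-order tensor rather than an abstract tensor. I would observe that both the outer products and the reshaping map $\vect$ are linear, so each coordinate of the result is a bilinear monomial in the entries of $\va$ and of the value vector; distributivity therefore holds entrywise and is inherited by the vectorised/matrix form. With that remark the three informal write steps described in the main text (retrieve the old value, subtract the old association, add the mixed association) collapse into the single stated update, completing the argument.
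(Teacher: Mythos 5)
Your proposal is correct and follows essentially the same route as the paper's proof: substitute $\mF_\text{old} = \vect(\vk_1 \otimes \vk_2) \otimes \vv_\text{old}$, use linearity of the outer product in its second argument to factor out the key term, and collect $\vv_\text{old} + \beta(\vv_\text{new} - \vv_\text{old}) = (1-\beta)\vv_\text{old} + \beta\vv_\text{new}$. Your extra remark justifying distributivity through the $\vect$ reshaping is a harmless refinement the paper leaves implicit.
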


\begin{proof}
\begin{align}
  \mF_\text{new} &= \mF_\text{old} + \beta \vect(\vk_1 \otimes \vk_2) \otimes (\vv_\text{new} - \vv_\text{old}) \\
  &= \vect(\vk_1 \otimes \vk_2) \otimes \vv_\text{old} + \vect(\vk_1 \otimes \vk_2) \otimes (\beta \vv_\text{new} - \beta \vv_\text{old}) \\
  &= \vect(\vk_1 \otimes \vk_2) \otimes [\vv_\text{old} + \beta \vv_\text{new} - \beta \vv_\text{old}] \\
  &= \vect(\vk_1 \otimes \vk_2) \otimes [(1-\beta)\vv_\text{old} + \beta \vv_\text{new}]
\end{align}
\end{proof}

\section{A Comment on the Regular bAbI Dataset and Previous Work}
The bAbI tasks is a popular toy dataset to benchmark neural networks with memory augmentations and reasoning capabilities~\citep{babi_tasks_weston}. 
It consists of a set of short stories with questions embedded in the text. 
The stories were generated by simulating multiple entities in a virtual environment and cover different contexts in which entities change their state or interact with each other.
Each story-sample belongs to one of 20 different tasks that the authors of the dataset considered important for intelligent dialogue agents. 
The tasks contain questions which require reasoning capabilities like deduction, coreference, or counting.
All tasks require some level of symbolic reasoning, and the first neural and non-neural baselines demonstrated poor generalisation performance on test data~\citep{babi_tasks_weston}.
In addition to the story sentences, the questions, and the answers, the dataset also included supporting facts which demarcated question-relevant sentences in the story. 
The stories often follow multiple parallel plots where each new sentence is advancing one of the plots by a single fact. 

The bAbI dataset did not include a strict experimental protocol which resulted in several variations that differed slightly.
Early methods achieved good results by relying on the supporting facts~\citep{weston2015memorynets, kumar2016dmn} or other supervised training signals (see e.g.~\citet{johnson2017transitions, Li2016gatedgraphs}).

Some researchers achieved great results by reformatting the data such that the question is read before the story or, similarly, by giving the model the capacity to lookup parts of the story, e.g. through some attentional mechanism, after the question has been read~\citep{sukhbaatar2015end,xiong2016dynamic,dehghani2018universal}.
Such methods have shown to be useful for answering questions while maintaining access to the full story.
We argue that this is similar to open-book question answering.
In such a setting, the model is incentivised to look up information instead of capturing the useful bits of the data it has seen. 
The advantage of the latter becomes more evident in a different scenario: imagine the model is processing a book where a user can ask a question about the content at any time.
An open-book approach will have to store all previous sentences in its memory and apply its answer-search mechanism to all of the data. 
Instead, a closed-book approach would store a compressed version of the story, or the question-relevant information of the story.

It is essential to acknowledge that the sentences in the bAbI stories of all tasks are short and simplistic.
Virtually every sentence contains precisely one fact. 
Because of that, it might be that sentence-level models have an advantage over word-level models.
Indeed, a previous sentence-level model has reported poor performance in the word-level setting~\citep{schlag2018nips}.
This limits their generality since sentences in natural language are often not limited to a single fact.

Lastly, even though the bAbI dataset was initially designed with the questions embedded in the story, virtually all methods so far preprocess the dataset such that a sample with four questions is split into four samples with one question each~\citep{weston2015memorynets}.
This arguably simplifies the problem because the model does not need to maintain the state of other entities which are not relevant to the question once it is read. However, it remains to be tested if this would result in inferior performance.

\section{Concatenated-bAbI Details}
\label{appendix:sec:catbAbI}
Concatenated-bAbI (catbAbI) is a preprocessing and experimental procedure to evaluate autoregressive models in their capability of predicting words which require certain reasoning skills (here answers of questions). 
In this work we only focused on the 10k samples per task version of bAbI but all our scripts can be applied to the 1k version as well. 
We used the same train/test/valid split of the data as in regular bAbI. 
In contrast to previous work, we do not split the stories to contain only one question.
We remove the sentence indecies and concatenate the sentences with answers following a question mark into one long sequence of words. The preprocessed data is a shuffled list of samples. Each sample comes with its task id for diagnosis. All answers are preceeded by a question mark.

To ensure that stories do not overlap and become ambiguous, we add a special end-of-story token before concatenating the new story.
For each word, the preprocessing script provides its task id to measure the performance on different tasks. 
Similarly, it also provides a special answer token which signifies if the current word is an answer or not.
Naturally, the task id and answer information are not provided to the model as an input.
The validation and test data are processed likewise, but for a proper comparison of various models, validation and test data are shuffled only once\footnote{We provide the preprocessed catbAbI data together with our code so future work can compare using the same validation and test sequence.}. During training and evaluation, the validation and test stories are drawn deterministically.

\begin{table}[h]
  \caption{Statistics of the catbAbI dataset based on our preprocessing of the regular bAbI data.}
  \label{appendix:tbl:catbAbIstats}
  \centering
  \begin{tabular}{cccccccc}
    \toprule
    subset & number of tokens & number of stories & number of questions \\
    \midrule
    train & \texttildelow 5M & 56,376 & 179,909 \\
    valid & \texttildelow 560k & 6,245 & 19,907 \\
    test & \texttildelow 560k & 6,247 & 19,910 \\
    \bottomrule
  \end{tabular}
  \vspace{-5pt}
\end{table}

During training we uniformly sample stories without replacement and concatenate them into a long sequence. 
Since a question mark is not always the end of a story we resolve any ambiguity by separating the stories with a special end-of-story token. 
The model is trained on this long sequence in an autoregressive way with truncated backpropagation. 
At the end of the epoch, we fill the batch with padding symbols if the sequences in the batch have different lengths. 

In LM-mode we mask padding tokens and in QA-mode we mask everything except the steps with a question mark as input. 
At the end of the epoch we carry over the hidden states to the new epoch. 
Reseting all hidden states to the same or to zeros had a weak negative effect on final performance but was not explored thouroghly. 
For evaluation on valid and test splits a copy of the hidden state of the first batch element is used.
Evaluation on valid is done throughout training with a large batch-size to maintain speed. 
Evaluation on test is done with a batch-size of one. 
During evaluation on valid and test the samples are picked sequentially to ensure that all models are evaluated on the same valid and test sequence of bAbI stories.

\section{Ablation}
\label{appendix:sec:ablation}
We evaluate the FWM model with different number of recurrent steps. Experiments in figure \ref{appendix:fig:readAblation} indicate that just one step is already achieving over 95\% accuracy but more inference steps help on rarer but harder tasks. We also test a FWM version where the read and query keys are concatenated instead of multiplied through the tensor product. In this version, the FWM results in a weight matrix with $\mathbb{R}^{2d_\text{FWM} \times d_\text{FWM}}$ instead of $\mathbb{R}^{d_\text{FWM}^2 \times d_\text{FWM}}$. The results in figure \ref{appendix:fig:productAblation} indicate a drop in performance.

\begin{figure}[H]
  \centering
  \includegraphics[width=\textwidth]{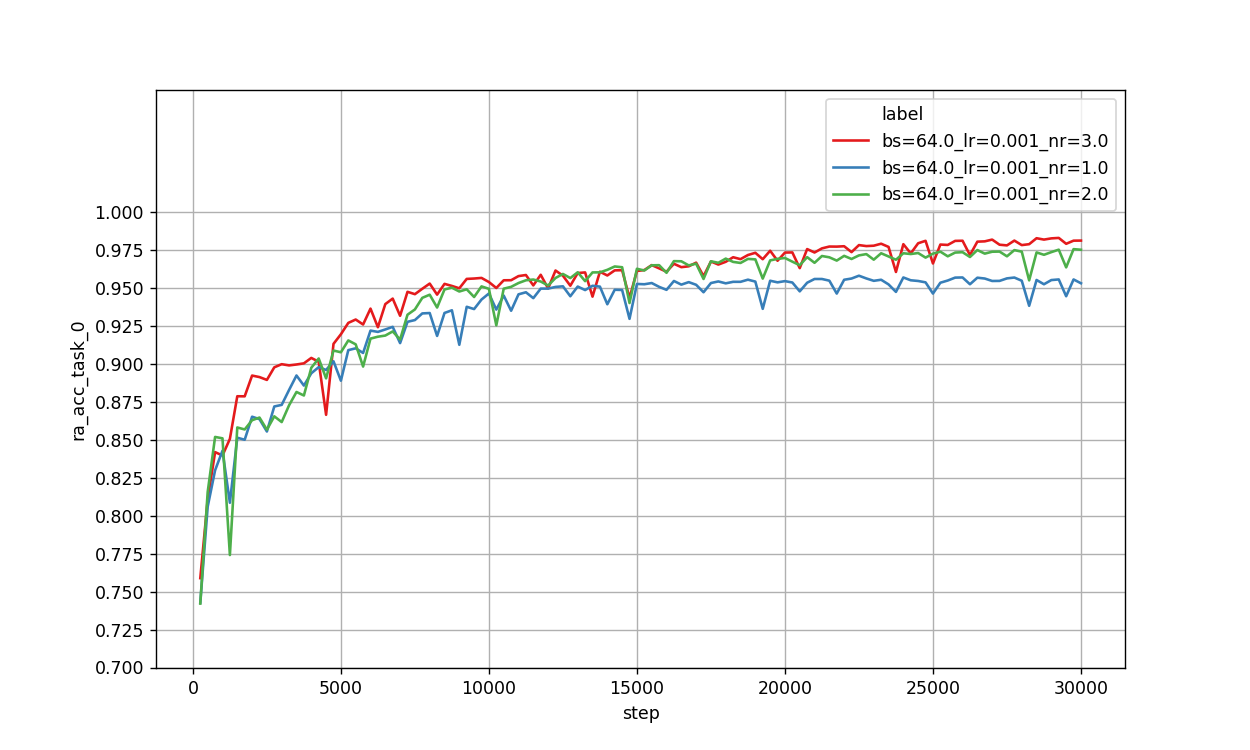}
  \caption{Comparison of the FWM with the same seed but with different $N_r$. }
  \label{appendix:fig:readAblation}
\end{figure}

\begin{figure}[H]
  \centering
  \begin{subfigure}[b]{\textwidth}
    \includegraphics[width=\textwidth]{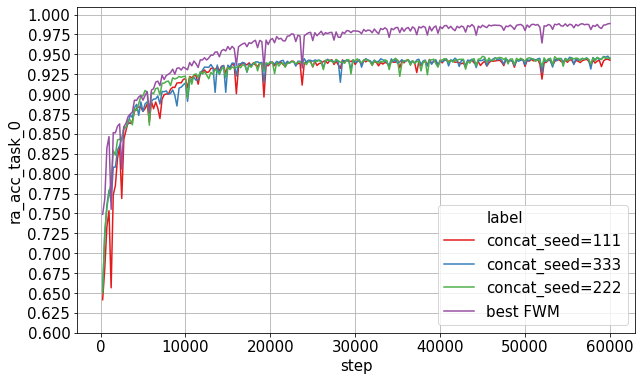}
  \end{subfigure}
  \begin{subfigure}[b]{\textwidth}
    \includegraphics[width=\textwidth]{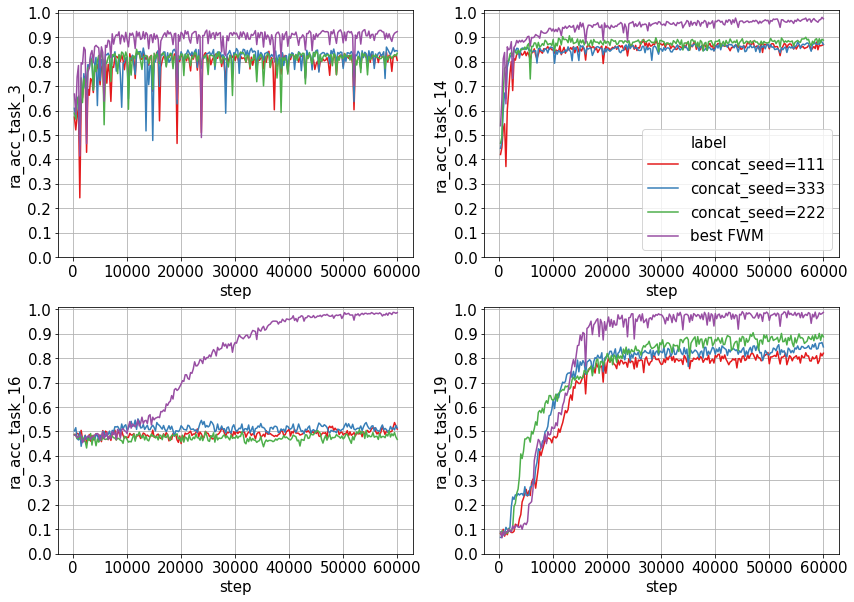}
  \end{subfigure}
  \caption{FWM model with a concatenated keys compared with the tensor product of the keys. With a concatenation of the respective keys and queries the Fast Weight tensor has a squared space and compute complexity $O(d_\text{FWM}^2)$ but performs worse on average (top figure). The performance difference is limited to more complex tasks such as 3, 14, 16, 19 (bottom figures).}
  \label{appendix:fig:productAblation}
\end{figure}

\newpage
\section{Hyperparameter search for catbAbI}
\label{appendix:sec:catbAbIparams}
Since catbAbI is an ongoing sequence of stories, backpropagation through time (BPTT) is infeasable for all models which is why we truncate BPTT to the last 200 tokens.
Hyperparameters were chosen such that they fit roughly on one GPU with 16GB of memory.
All models use a token embedding size of 256 and the Adam optimizer. 
We exclusively tuned the hyperparameters for the QM setting and transfer only the best to the LM setting. 
We run a grid search over the batch-size, learning rate, and various model specific parameters such as dropout rates or number of layers on top of additional manually chosen settings. For computational reasons we run two rounds of grid-search: an initial round of 3,000 steps of which the best are moved to the second round where we train them for 30,000 or 60,000 steps. In the following subsections we give further details for each model seperately. 

\subsection{Fast Weight Memory}
We set $d_\text{LSTM}=256, d_\text{FWM}=32, N_r=3$ and searched experimented with two seeds for batch sizes 64, 128 and learning rates 0.0001, 0.00025, 0.0005, 0.001, 0.002.
\begin{figure}[H]
  \centering
  \begin{subfigure}[b]{0.8\linewidth}
    \includegraphics[width=\linewidth]{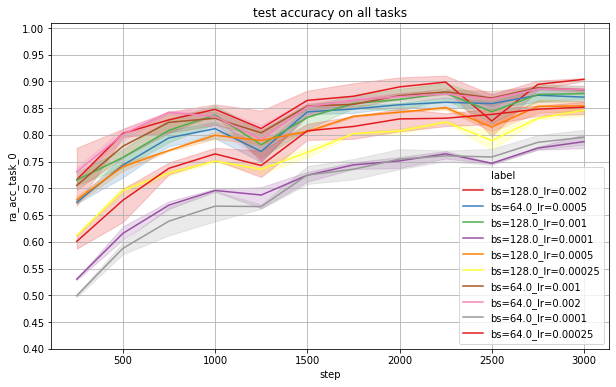}
  \end{subfigure}
  \begin{subfigure}[b]{0.8\linewidth}
    \includegraphics[width=\linewidth]{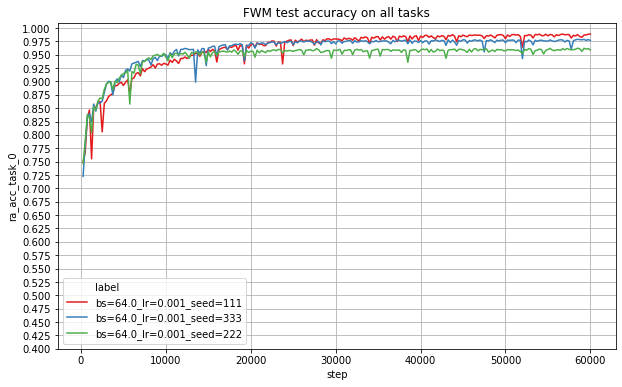}
  \end{subfigure}
  \caption{Top: Hyperparameter search runs for different batch sizes and learning rates of the FWM model in the QM setting with the average accuracy on all tasks. Bottom: FWM performance over 60,000 steps with three seeds.}
  \label{appendix:fig:fwm}
\end{figure}

\subsection{Metalearned Neural Memory}
We only experimented with the plastic version of MNM as it was reported to be the best. We used the same hyperparameters for the fast weights as reported by \citet{munkhdalai2019metalearned}: 3 layer of fast weights with a dimensionality of 100. We searched over the batch sizes 64, 128; learning rates 0.00025, 0.0005, 0.001, 0.002; and meta-objective coefficient (reg) 1.0, 2.0. In the first 3,000 steps the MNM didn't show any instability but for longer runs the MNM would sometimes result in NaNs or become unstable.
\begin{figure}[H]
  \centering
  \begin{subfigure}[b]{\linewidth}
    \includegraphics[width=\linewidth]{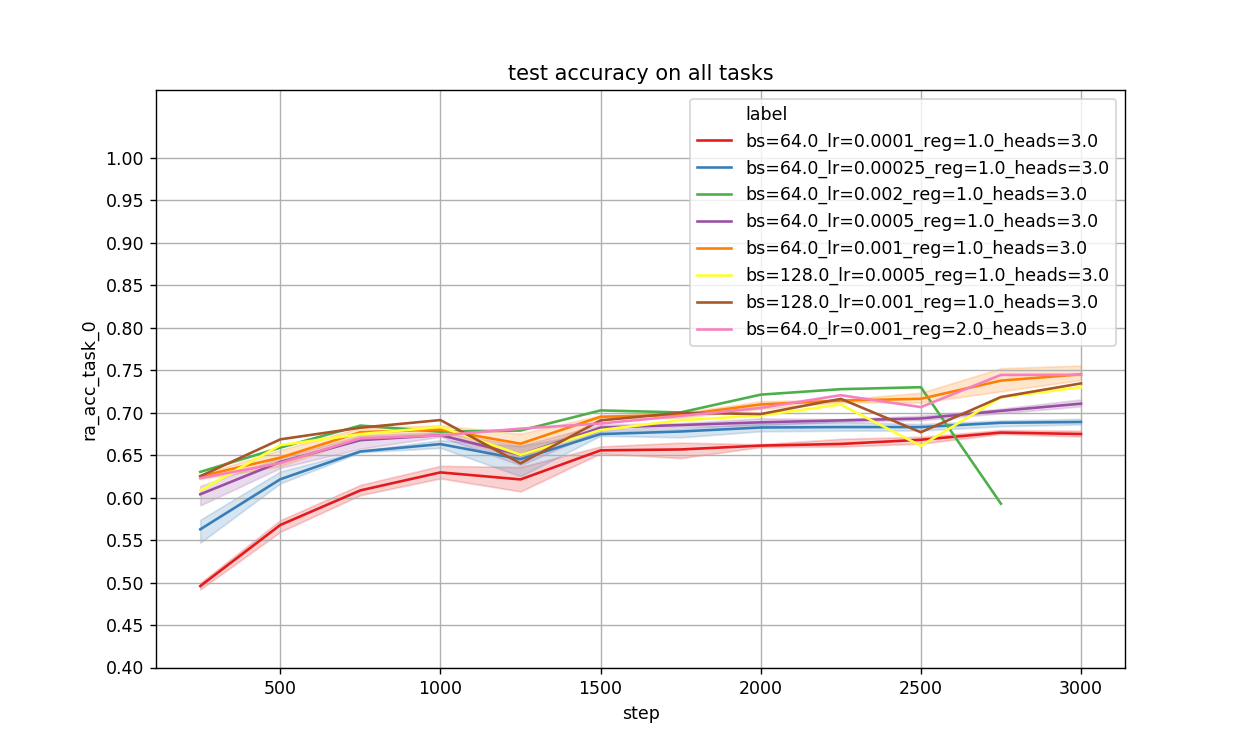}
  \end{subfigure}
  \begin{subfigure}[b]{\linewidth}
    \includegraphics[width=\linewidth]{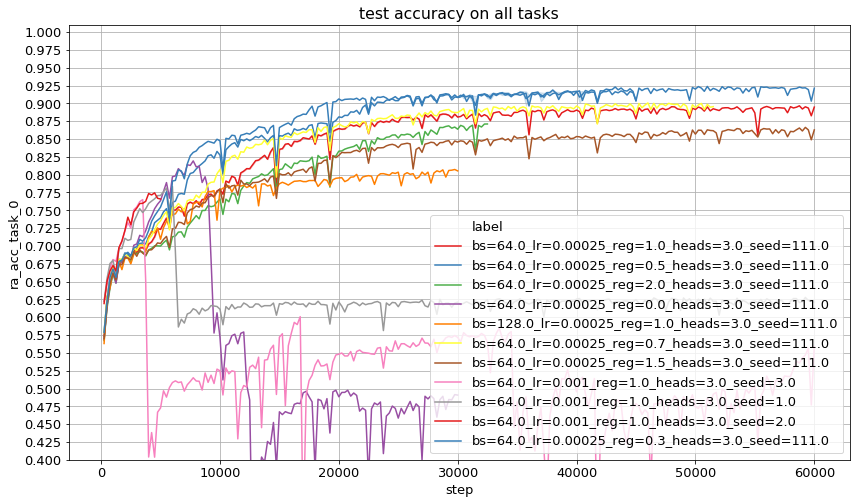}
  \end{subfigure}
  \caption{Top: Hyperparameter search runs for different batch sizes and learning rates of the MNM model in the QM setting with the average accuracy on all tasks. Bottom: MNM model with three different seeds, batch size 64, and learning rate 0.001 in the QM setting. Reported accuracy is the average on all tasks.}
  \label{appendix:fig:mnm}
\end{figure}

\subsection{Transformer-XL}
We ported the official Transformer-XL implementation\footnote{Source: \url{github.com/kimiyoung/transformer-xl/blob/master/pytorch/mem_transformer.py}} to our own codebase; fully reusing the model code for our catbAbI experiments. We employ a linear learning-rate warm-up schedule over the first 1000 steps and run a grid search over batch size, learning rate, number of layers, and memory length with some additional manual selected parameters. Our best setting uses a learning rate of 0.00025, memory width of 1200, a hidden state size of $d_\text{model}=512$, an inner dimension of the fully connected part of $d_\text{inner}=2048$, and 3 transformer layers. Several long runs can be seen in figure \ref{appendix:fig:txl_long}. Our experiments show how various seeds eventually become unstable and overfit. Some settings also resulted in NaNs which we have removed from figure \ref{appendix:fig:txl_long}. The best performing models and most stable where 3 layer models with a large memory and a small learning rate (see figure \ref{appendix:fig:txl_nl3_seeds}).
\begin{figure}[H]
  \centering
  \vspace{-10pt}
  \begin{subfigure}[b]{.49\linewidth}
    \includegraphics[width=\linewidth]{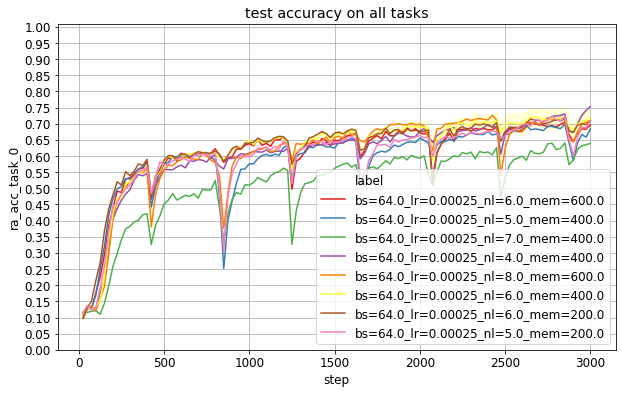}
  \end{subfigure}
  \begin{subfigure}[b]{.49\linewidth}
    \includegraphics[width=\linewidth]{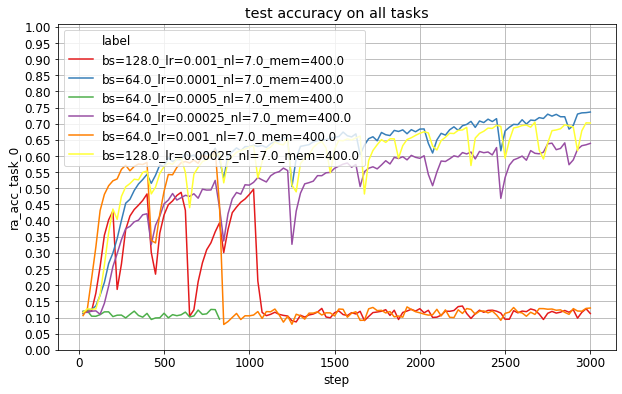}
  \end{subfigure}
  \caption{Hyperparameter search runs for different batch sizes and learning rates of the Transformer-XL in the QM setting with the average accuracy on all tasks. Left graph varies number of layers and memory length. Right graph varies batch size and learning rate for 7 layers.}
  \label{appendix:fig:txl3k}
\end{figure}

\begin{figure}[H]
  \centering
  \vspace{-10pt}
  \begin{subfigure}[b]{.49\linewidth}
    \includegraphics[width=1.0\linewidth]{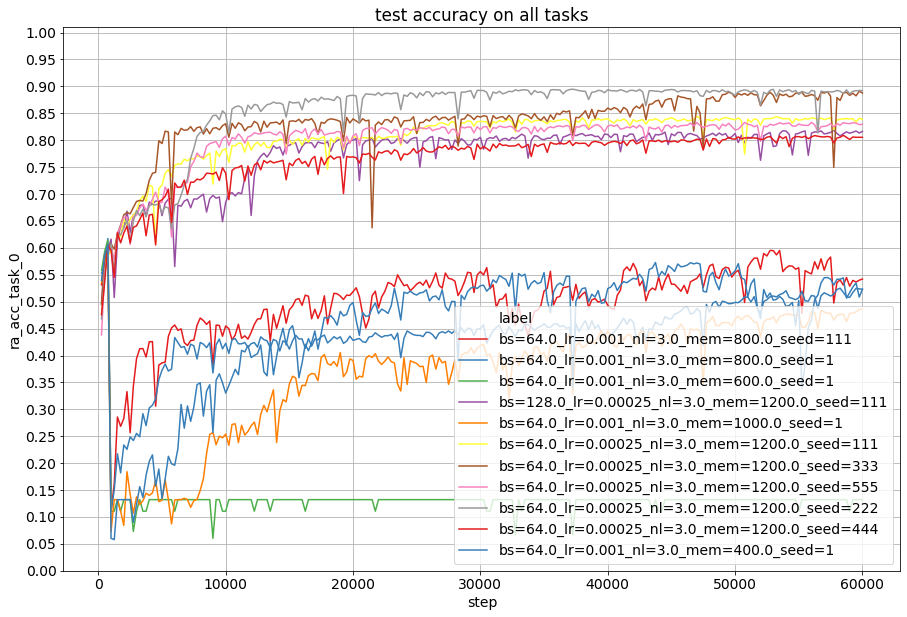}
  \end{subfigure}
  \begin{subfigure}[b]{.49\linewidth}
    \includegraphics[width=1.0\linewidth]{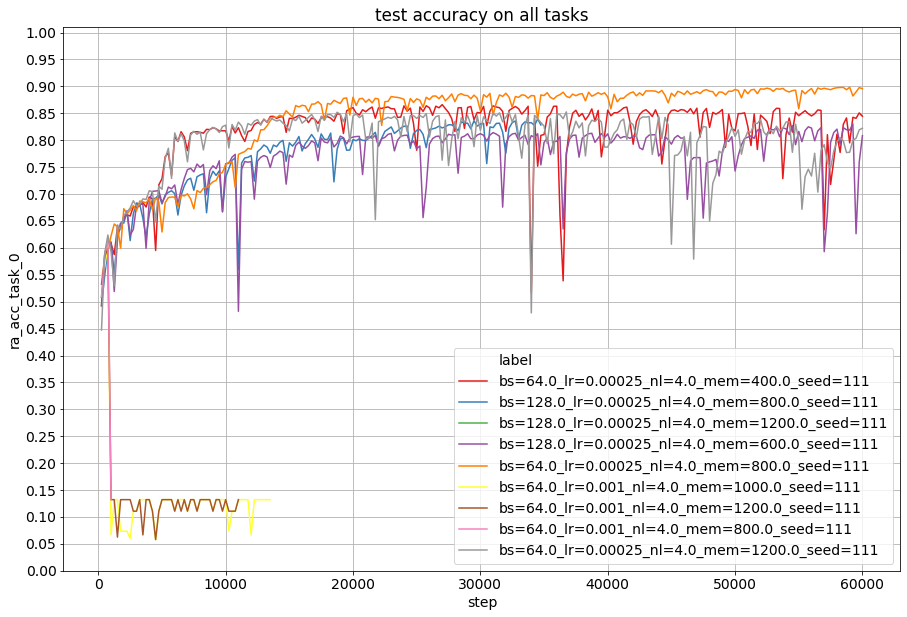}
  \end{subfigure}
  \begin{subfigure}[b]{.49\linewidth}
    \includegraphics[width=1.0\linewidth]{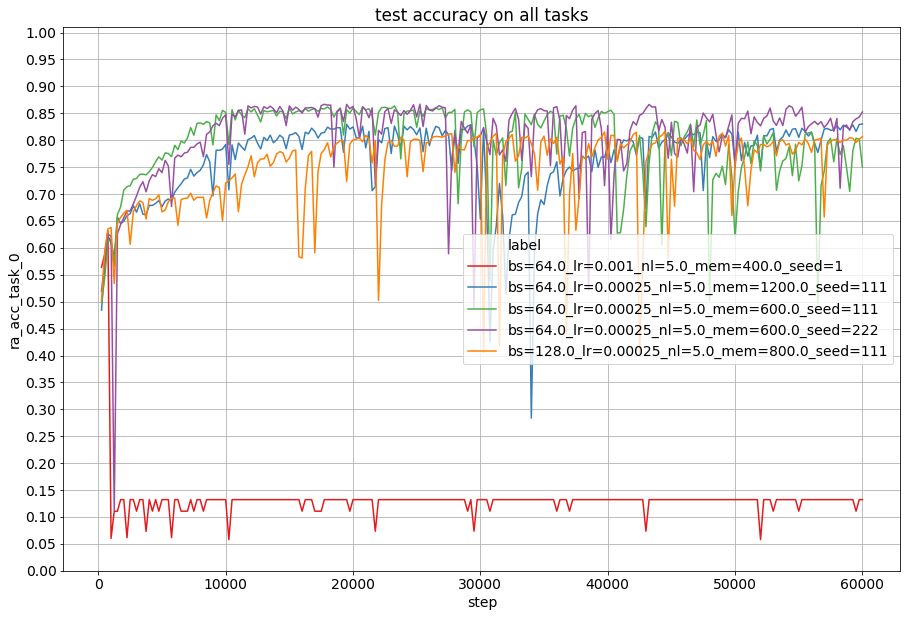}
  \end{subfigure}
  \begin{subfigure}[b]{.49\linewidth}
    \includegraphics[width=1.0\linewidth]{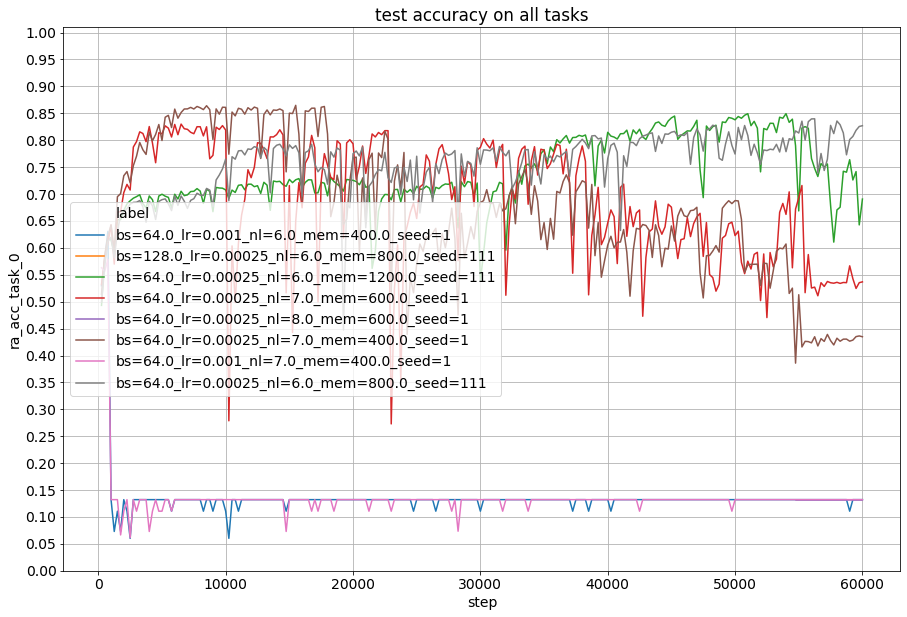}
  \end{subfigure}
  \caption{Long hyperparameter search runs for TXL with various layers and memory sizes. The experiments are grouped based on the number of layers. Many runs begin to diverge late into the training process.}
  \label{appendix:fig:txl_long}
\end{figure}

\begin{figure}[H]
  \centering
  \vspace{-10pt}
  \includegraphics[width=0.8\textwidth]{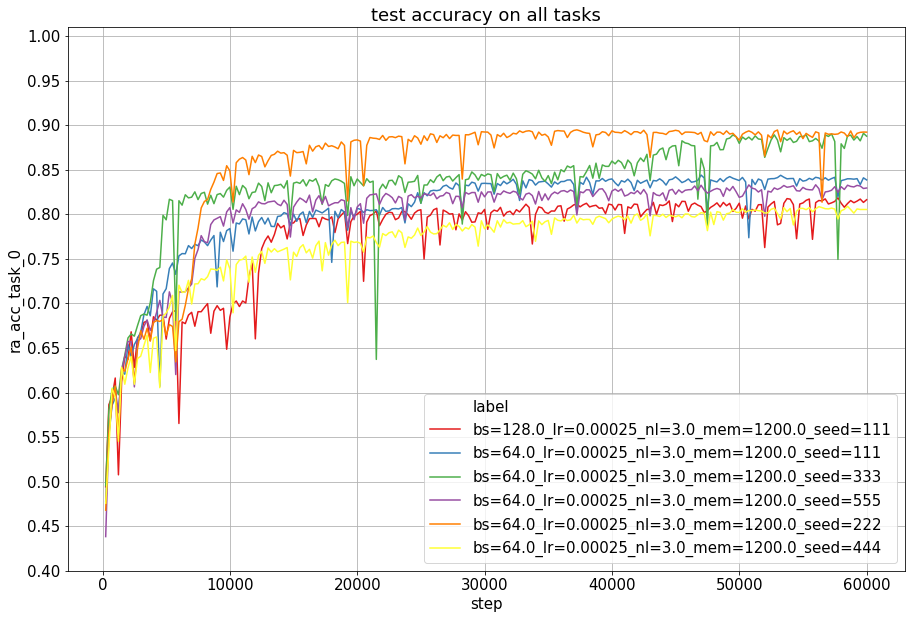}
  \caption{Various seeds for the best Transformer-XL hyperparameters: 3-layers, memory windows of 1200 tokens, a learning rate of 0.00025, and a batch size of 64.}
  \label{appendix:fig:txl_nl3_seeds}
\end{figure}

\subsection{LSTM}
We heavily regularize a four-layer stack of residually connected LSTM cells, each with 512 hidden units. Inspired by AWD-LSTM~\citep{merity2018regularizing}, we use dropout in four different ways to regularize the model. We dropout the tokens of the input sequence, elements of the embedding vector, elements of the recurrent weight matrix, and elements of the of the hidden representation between LSTM layers.
\begin{figure}[H]
  \centering
  \begin{subfigure}[b]{.49\linewidth}
    \includegraphics[width=\linewidth]{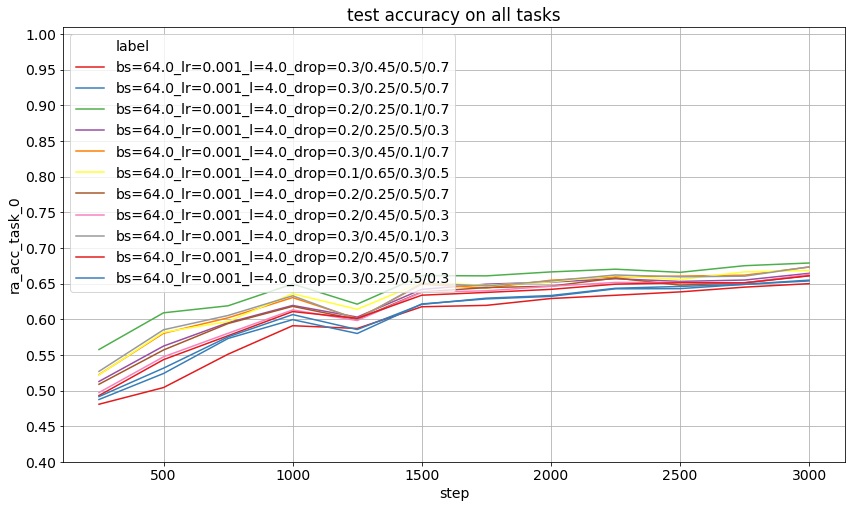}
  \end{subfigure}
  \begin{subfigure}[b]{.49\linewidth}
    \includegraphics[width=\linewidth]{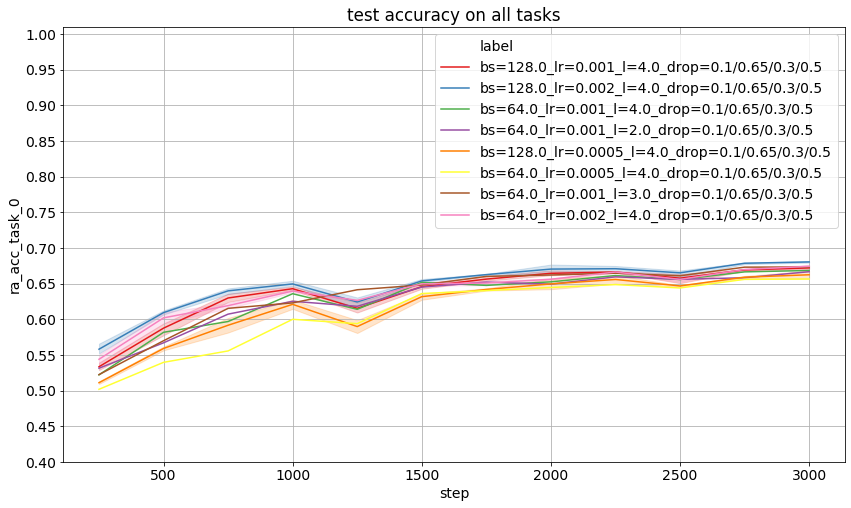}
  \end{subfigure}
  \caption{Hyperparameter search runs for different batch sizes and learning rates of the LSTM in the QM setting with the average accuracy on all tasks.}
  \label{appendix:fig:lstm3k}
\end{figure}
\begin{figure}[H]
  \centering
  \includegraphics[width=\textwidth]{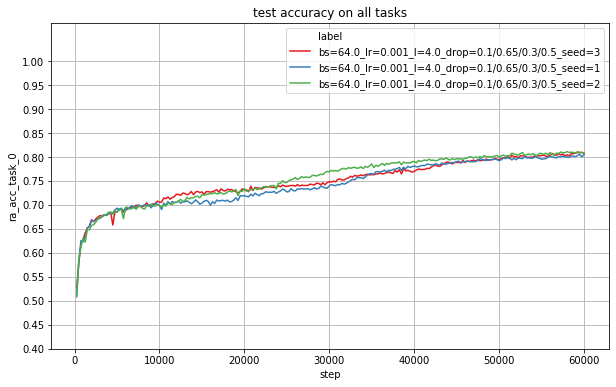}
  \caption{Average accuracy of three seeds of the best LSTM settings over all tasks on the catbAbI QM-mode dataset.}
  \label{appendix:fig:lstm60k}
\end{figure}

\subsection{Attention to the Recent Past Fast Weights}
\label{appendix:sec:jbfw_search}
We evaluate our own implementation of Fast Weights as introduced by \citet{Ba2016using}. They propose an RNN augmented with fast weights which modulate the slow weights of an Elman RNN using a fixed fast weight learning and decay rate (JBFW). Our hyperparameter search did not result in any model performing over 15\% on the test data. 
\begin{figure}[H]
  \centering
  \includegraphics[width=0.8\textwidth]{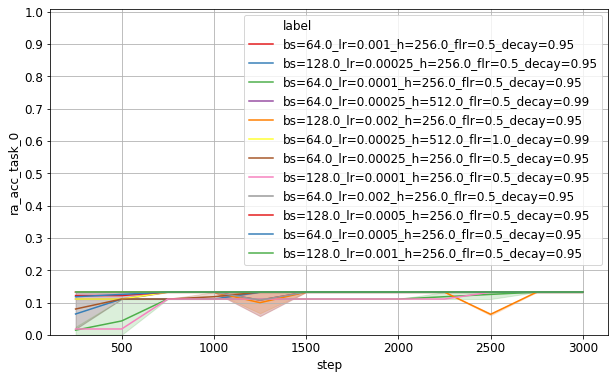}
  \caption{Hyperparameter search for the Fast Weights attending to the recent past by \citet{Ba2016using}.}
  \label{appendix:fig:jbfw3k}
\end{figure}

\section{Best catbAbI Runs Broken Down by Task}
\begin{figure}[H]
  \centering
  \vspace{-10pt}
  \includegraphics[width=\textwidth]{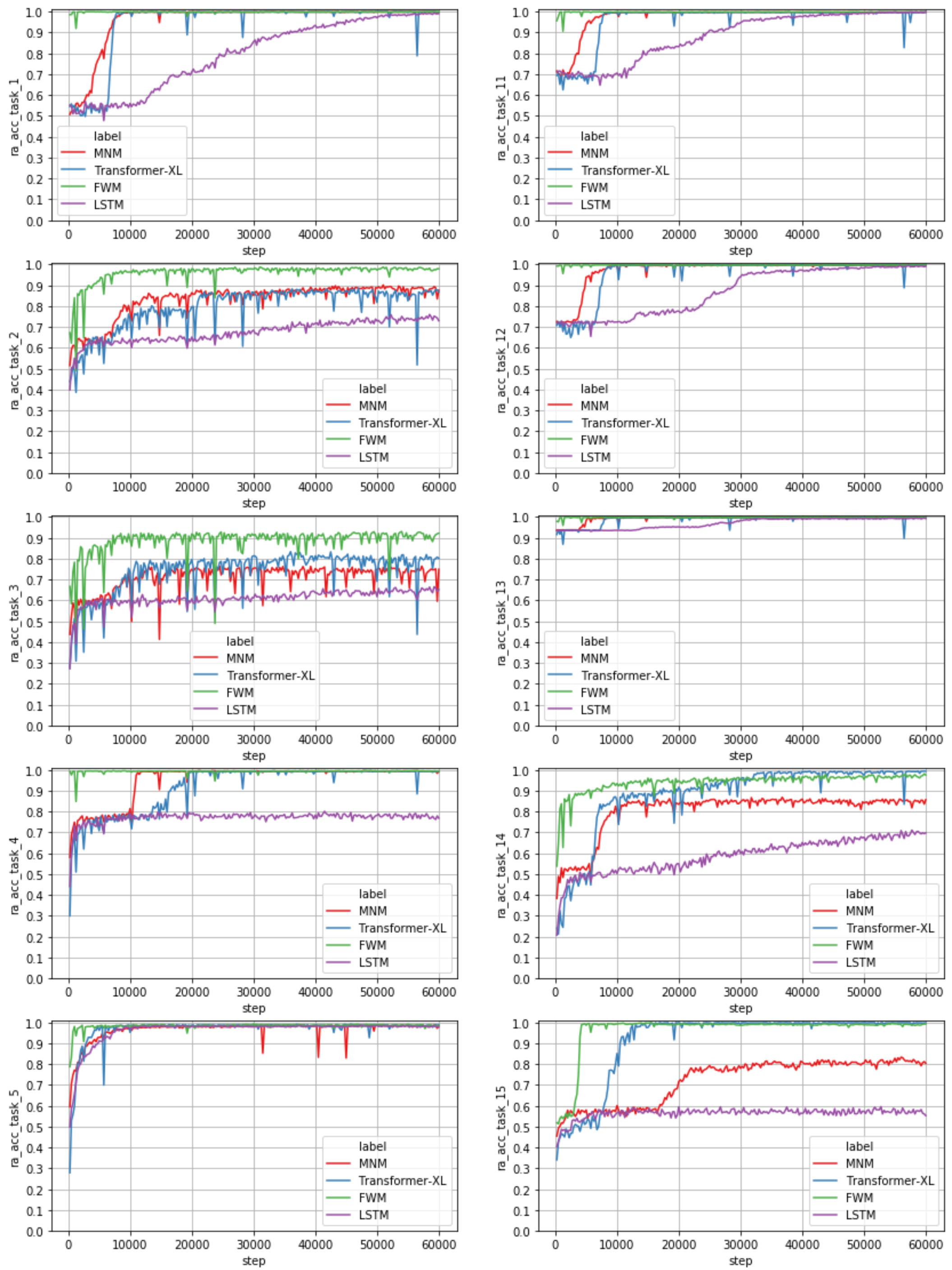}
  \vspace{-10pt}
  \caption{Per-task test set performance comparison of the best catbAbI runs (first part).}
  \label{appendix:fig:best_all_tasks}
\end{figure}
\begin{figure}[H]
  \centering
  \vspace{-10pt}
  \includegraphics[width=\textwidth]{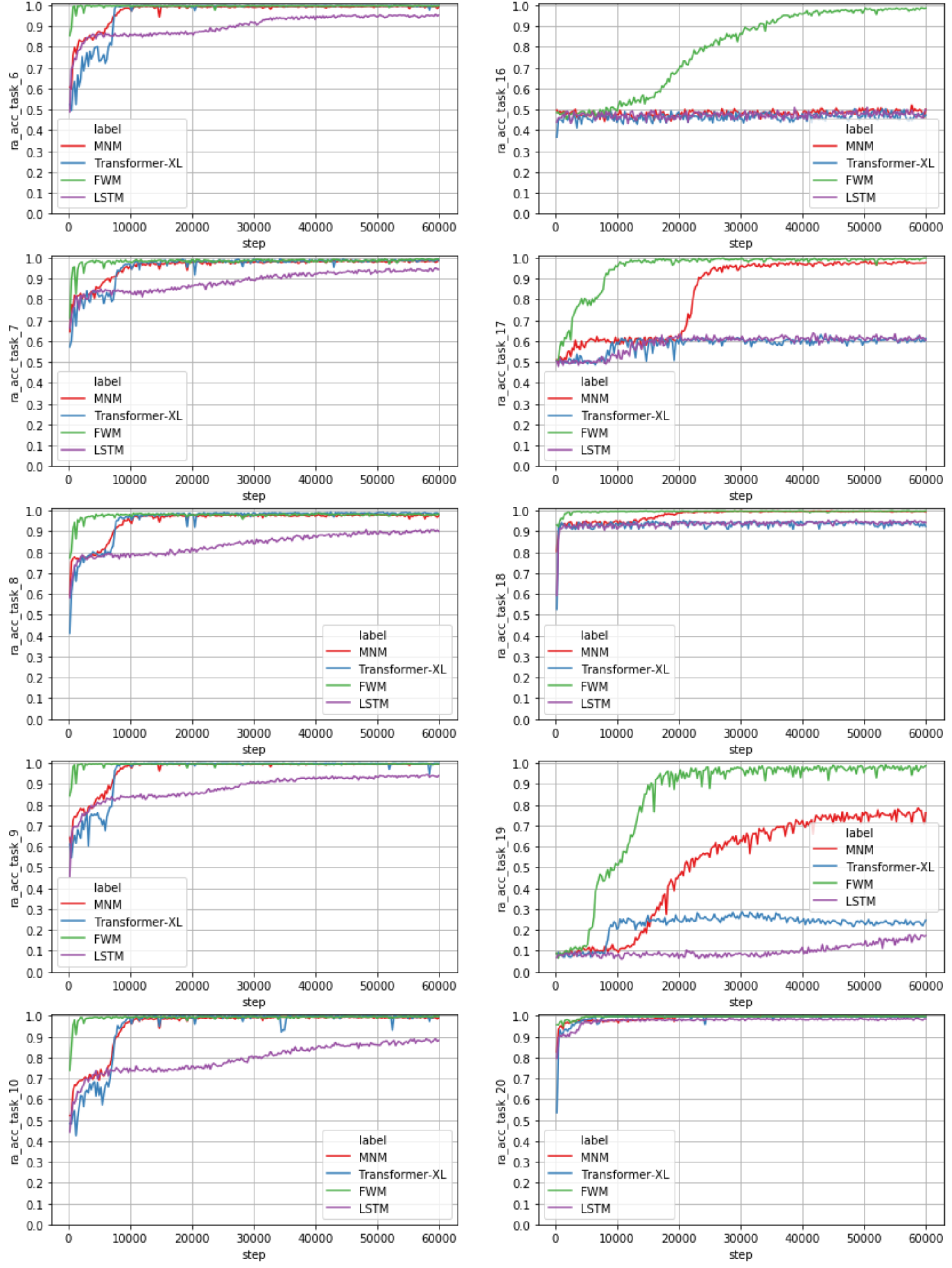}
  \vspace{-10pt}
  \caption{Per-task test set performance comparison of the best catbAbI runs (second part).}
  \label{appendix:fig:best_all_tasks_part2}
\end{figure}

\newpage
\section{Language Modelling}
\label{appendix:sec:LM}
The code of our language modelling experiments is forked from Uber AI Lab's (\href{https://github.com/uber-research/differentiable-plasticity/tree/master/awd-lstm-lm}{github.com/uber-research/differentiable-plasticity/tree/master/awd-lstm-lm}) which is itself forked from the Salesforce Language model toolkit (\href{https://github.com/Smerity/awd-lstm-lm}{github.com/Smerity/awd-lstm-lm}). The FWM uses the same three layer LSTM as the slow RNN with the same optimisations as done by ~\citet{merity2018regularizing}.
An alternative which we do not explore here is to use multiple FWM-layers each with one LSTM cell and one FWM.
We trained our model for 1000 epochs on PTB and 1600 epochs on WT2. 
Similar to \citet{merity2018regularizing} we switched from Adam to Averaged Stochastic Gradient Descent (ASGD) after 916 epochs and 1372 epochs for PTB and WT2 models respectively.
We tune the dropout parameters on the validation set and, after training, we also tune the softmax temperature (tuning the softmax temperature results in \texttildelow1 ppl of improvement). 
The embedding layers were initialized randomly from a uniform distribution, uniform(-0.25, 0.25), which was crucial in our FWM language models. 
The hyperparameters used for all reported results are in table~\ref{appendix:tbl:lmparams}.

The Transformer-XL PTB results were based using the authors official code and hyperparameter setting (see \href{http://zihangdai.github.io/misc/ptb.zip}{zihangdai.github.io/misc/ptb.zip}) which includes AWD-style regularisation, model averaging, and softmax tuning. The WT2 results are based on the same code using the best hyperparameters found by Tim Dettmers (see \href{https://github.com/TimDettmers/transformer-xl/tree/wikitext2/pytorch}{github.com/TimDettmers/transformer-xl/tree/wikitext2/pytorch}). 
\begin{table}[h]
  \caption{Best hyperparameters of the FWM for our language modelling experiments}
  \centering
  \setlength{\tabcolsep}{0.15cm}
  \begin{tabular}{ccccccccc}
    \toprule
    dataset & droupout & dropoute & dropouth & dropouti & wdrop & batch size & ADAM lr & ASGD lr \\
    \midrule
    PTB & 0.4 & 0.1 & 0.3 & 0.5 & 0.66 & 20 & 0.001 & 2.0 \\
    WT2 & 0.4 & 0.1 & 0.25 & 0.7 & 0.61 & 80 & 0.001 & 0.5 \\
    \bottomrule
  \end{tabular}
  \label{appendix:tbl:lmparams}
\end{table}

\subsection{Results}
\label{appendix:sec:lmResults}
\begin{table}[H]
  \caption{The detailed \textit{evaluation results} of the FWM and Transformer-XL language model for all data partitions of the PTB and WT2 datasets using a batch size of 1. Experiment logs can be found in our git repository.}
  \vspace{6pt}
  \centering
  \setlength{\tabcolsep}{0.15cm}
  \begin{tabular}{c|cc|ccc|ccc|ccc}
    \toprule
    \multirow{2}{*}{model} & \multirow{2}{*}{dataset} & \multirow{2}{*}{seed} & \multicolumn{3}{c}{loss} & \multicolumn{3}{c}{ppl} & \multicolumn{3}{c}{bits per word} \\
                        & & & train & valid & test & train & valid & test & train & valid & test \\
    \midrule
    \multirow{6}{*}{FWM} & 
      \multirow{3}{*}{PTB} & 141 & 2.82 & 4.04 & 4.00 & 16.77 & 56.76 & \textbf{54.48} & 4.068 & 5.827 & 5.768 \\
      &  & 142 & 2.66 & 4.05 & 4.01 & 14.26 & 57.43 & 55.17 & 3.834 & 5.844 & 5.786 \\
      &  & 143 & 3.16 & 4.08 & 4.04 & 23.66 & 59.31 & 56.90 & 4.564 & 5.890 & 5.830 \\
      \rule{0pt}{1.3\normalbaselineskip} & \multirow{3}{*}{WT2} 
        & 1881 & 3.32 & 4.23 & 4.18 & 27.80 & 68.74 & 65.07 & 4.797 & 6.103 & 6.024 \\
      & & 1882 & 2.81 & 4.16 & 4.12 & 16.66 & 63.98 & \textbf{61.65} & 4.058 & 6.000 & 5.942 \\
      & & 1883 & 3.28 & 4.23 & 4.17 & 26.60 & 68.39 & 64.91 & 4.733 & 6.096 & 6.020 \\
      \midrule
      \multirow{6}{*}{TXL} & \multirow{3}{*}{PTB} 
         & 2 & 2.87 & 4.09 & 4.04 & 17.62 & 59.71 & 56.63 & 4.139 & 5.900 & 5.824 \\
      &  & 3    & 2.88 & 4.08 & 4.03 & 17.84 & 59.39 & \textbf{56.50} & 4.157 & 5.892 & 5.820\\
      &  & 1111 & 2.86 & 4.09 & 4.03 & 17.52 & 59.73 & 56.53 & 4.131 & 5.900 & 5.821 \\
        \rule{0pt}{1.3\normalbaselineskip} & \multirow{3}{*}{WT2} 
        & 444 & 2.61 & 4.19 & 4.15 & 13.60 & 65.71 & 63.28 & 13.599 & 65.706 & 63.283 \\
      & & 555 & 2.61 & 4.19 & 4.15 & 13.66 & 65.83 & 63.40 & 13.660 & 65.830 & 63.400 \\
      & & 666 & 2.61 & 4.14 & 4.19 & 13.62 & 65.73 & \textbf{63.11} & 13.622 & 65.725 & 63.109 \\
     \bottomrule
  \end{tabular}
  \label{appendix:tbl:lmresults}
\end{table}

\newpage
\section{Meta Reinforcement Learning}
\label{appendix:sec:metaRL}
The meta reinforcement learning experiments trains an agent in training POMDPs and evaluates it on test POMDPs.
The environments are directed graphs with labeled edges. 
As part of the data generating process, novel graphs are sampled according the python algorithm in listing \ref{appendix:lst:genEnvs}.
Actions and states are one-hot encoded. The agent receives a 17 dimensional input: the reward location, the current location, the previous action, a fixed bit, the fractional progress as $\frac{\text{current step}}{\text{total steps}}$, and the current reward sum. Getting to the reward location gives a reward of 10. Choosing an invalid action gives a penalty of 0.05. We use a discounting factor of 0.9 and a value coefficient of 0.1. The entropy coefficient of A2C is set to 0.03.

The agent and reward locations are randomly selected at the beginning of the episode. With only 5 states, the reward is reachable in at most 5 steps. As elaborated in section \ref{subsec:metaRL}, such optimal behaviour is only possible once the agent has learned the graphs from its experience. Whenever the reward is placed in the environment a reset timer is set to 0. When the agent reaches the reward, or after 6 unsuccessful steps, the reset timer is set to 0 and the reward and agent are randomly placed in the environment. We train with a batch size of 600 agents and optimize the average step loss using the Adam optimizer.

\begin{listing}[H]
\begin{minted}{python}
import numpy as np
    
def sample_adjacency_matrix(n_actions, n_states, random_state):
  while True:
    A = np.zeros((n_actions, n_states, n_states))

    # every state has to be leavable by at least one action
    for from_state in range(n_states):
      to_state = random_state.choice([i for i in range(n_states)
                                      if i != from_state])
      action = random_state.randint(0, n_actions)
      A[action, from_state, to_state] = 1

    # every state has to be reachable by one or more from-states
    for to_state in range(n_states):
      # only select states which don't have any neighbours given an action
      action_list, from_list = np.where(A.sum(2) == 0)
      # remove self from the selection
      options = np.asarray(list(filter(lambda x: x[0] != to_state,
                                       zip(from_list, action_list))))
      indecies = np.arange(options.shape[0])
      chosen_idx = random_state.choice(indecies)
      from_state, action = options[chosen_idx]
      A[action, from_state, to_state] = 1

    # reject if they are not all connected
    Q = A.sum(0)
    Q[Q > 0] = 1
    for _ in range(n_states):
      Q = np.matmul(Q,Q)
    if (Q == 0).sum() == 0:
      return A
\end{minted}
\caption{Python3 code to sample new environments such that any state is reachable by any other state.}
\label{appendix:lst:genEnvs}
\end{listing}

\end{document}